\definecolor{bananamania}{rgb}{0.98, 0.91, 0.71}
\newtheorem{theorem}{Theorem}
\begin{document}

\title{Pontogammarus Maeoticus Swarm Optimization: \\A Metaheuristic Optimization Algorithm}

\author{Benyamin Ghojogh,
		Saeed Sharifian\\

\thanks{Benyamin Ghojogh's e-mail: \href{mailto:ghojogh_benyamin@ee.sharif.edu}{ghojogh\_benyamin@ee.sharif.edu}}
\thanks{Saeed Sharifian's e-mail: \href{mailto:sharifian_s@aut.ac.ir}{sharifian\_s@aut.ac.ir}}
\thanks{Both authors are with Department of Electrical Engineering, Amirkabir University of Technology, Tehran, Iran. This article was written in fall 2015 and revised in fall 2017.}}


\maketitle

\begin{abstract}
Nowadays, metaheuristic optimization algorithms are used to find the global optima in difficult search spaces. Pontogammarus Maeoticus Swarm Optimization (PMSO) is a metaheuristic algorithm imitating aquatic nature and foraging behavior. Pontogammarus Maeoticus, also called Gammarus in short, is a tiny creature found mostly in coast of Caspian Sea in Iran. 
In this algorithm, global optima is modeled as sea edge (coast) to which Gammarus creatures are willing to move in order to rest from sea waves and forage in sand. Sea waves satisfy exploration and foraging models exploitation. The strength of sea wave is determined according to distance of Gammarus from sea edge. The angles of waves applied on several particles are set randomly helping algorithm not be stuck in local bests. 
Meanwhile, the neighborhood of particles change adaptively resulting in more efficient progress in searching.
The proposed algorithm, although is applicable on any optimization problem, is experimented for partially shaded solar PV array. Experiments on CEC05 benchmarks, as well as solar PV array, show the effectiveness of this optimization algorithm.
\end{abstract}

\begin{IEEEkeywords}
Pontogammarus Maeoticus, Gammarus swarm, metaheuristic optimization.
\end{IEEEkeywords}

\IEEEpeerreviewmaketitle

\section{Introduction}\label{section_introduction}
\IEEEPARstart{M}{etaheuristic} algorithms for optimization purposes have been gotten attention as much as the classic mathematical solutions. The reason can be seeked out in this fact that metaheuristic methods are based on soft computing and can perform better in complicated problems, especially in problems with lots of local solutions. On the other hand, mathematical optimization algorithms are usually harder to use since they may be stuck in local solutions \cite{talbi2009metaheuristics,yang2010nature}. Metaheuristic algorithms are actually one of the branches of artificial intelligence and are mostly used to optimize a cost function as well as classical algorithms \cite{engelbrecht2007computational}.

There are lots of metaheuristic algorithms proposed for different applications. Several well-known ones are introduced here. Genetic Algorithm (GA), proposed by Holland \cite{holland1975adaptation} and Goldberg \cite{goldberg1989genetic} is a genetic-inspired algorithm dealing with mutation and cross-over of chromosomes simulating the possible solutions. Genetic programming, proposed by Koza \cite{koza1992genetic}, is also a chromosome-based algorithm with mutations and cross-overs. However, it is mostly used for solving tree-based problems. This algorithm was first introduced to extend artificial intelligence to programming goals. It is usually used to model the problem with a mathematical function. Evolutionary programming, proposed by Fogel \emph{et. al.} \cite{fogel1966artificial}, De Jong \cite{de1975analysis} and Koza \cite{koza1990genetic}, is a chromosome-based algorithm with a mutation operator and a strategy parameter. Tabu search, proposed by Glover \cite{glover1977heuristics}, prepares a tabu list to list illegal paths of search in it. Simulated annealing, proposed by Kirkpatrick \emph{et. al.} \cite{kirkpatrick1983optimization}, simulates the annealing process to solve an optimization problem. Temperature is a parameter of this algorithm, which decreases by going forward in the iterations. Particle Swarm Optimization (PSO), proposed by Kennedy and Eberhart \cite{kennedy2011particle}, is inspired by the behavior of bird or fish swarms and simulates the migration of swarm particles. The particles search the landscape and the global and local bests affect the direction and speed of particles' movements.

Lots of metaheuristic algorithms have simulated the behavior of aquatic creatures. As an example, krill herd algorithm, proposed by Gandomi and Alavi \cite{gandomi2012krill}, can be mentioned in which three things affect the movement of the krill individuals: (I) Induced Movement, (II) Foraging and (III) Random Diffusion. There are some other papers improving this algorithm, such as papers \cite{guo2014new} and \cite{saremi2014chaotic}. Artificial algae algorithm \cite{uymaz2015artificial} is another example of aquatic-inspired algorithms. 
In this article, a new metaheuristic optimization algorithm, Pontogammarus Maeoticus Swarm Optimization (PMSO), is proposed which is also inspired by an aquatic creature named Pontogammarus Maeoticus or shortly Gammarus.

In PMSO algorithm, the behavior of Pontogammarus Maeoticus swarm is simulated as a metaheuristic optimization algorithm. Pontogammarus Maeoticus, or Gammarus, is a tiny sea creature especially found in sea edges (coasts). In this algorithm, local search exploitation is satisfied by modeling foraging of Gammarus individuals in their neighborhood. They are willing to reach the sea edge to settle in there and be in peace from the sea waves. Hence, global optima is modeled as the sea edge and the particles search for it to reach. In order to have exploration on different parts of search space, every Gammarus is shaken using sea wave whose strength is related to distance of the Gammarus from the best answer found so far. 
The sea waves applied on Gammarus individuals which are close to sea edge, are toward the sea edge to help better exploitation around global best found so far. On the other hand, the particles far from sea edge are moved randomly in order to have better exploration in unseen parts of search space and escaping local optimums.
The neighborhoods of Gammarus creatures are also subject to change adaptively resulting in better progress in searching for global optima.

As an example of its application, the proposed PMSO algorithm is also used for the problem of partial shading in solar arrays in order to maximize the output power.
Solar PV arrays consist of several PhotoVoltaic (PV) cells which are connected in different configurations to each other. The simplest structure of solar PV array is TCT configuration. This configuration, however, cannot cancel the effect of partial shadow on the array and therefore, the output power is not perfect. In order to roughly cancel the effect of shadow as much as possible, other configurations have been proposed during the years. For instance, the Su Do Ku structure is proposed based on the Su Do Ku puzzle arrangement \cite{woyte2003partial}. However, Su Do Ku configuration has several drawbacks \cite{deshkar2015solar}. Hence, recently, researchers have worked on solar PV array problem to enhance the output power. Although there are lots of works in non-heuristic areas for this problem such as \cite{bidram2012control,rakesh2016performance}, several researches are using metaheuristic optimization for that. Paper \cite{deshkar2015solar} can be mentioned as an example of the recent works on sollar array which utilize metaheuristic algorithms. In \cite{deshkar2015solar}, Genetic algorithm is used in order to improve the performance of solar PV array. 

The remainder of paper is organized as follows.  
Section \ref{Section_Intoducing_Gammarus} introduces Gammarus creature in a non-mathematical perspective. Different parts of PMSO optimization algorithm and its nature inspirations are explained in detail in Section \ref{Section_Algorithm}. 
In Section \ref{Section_Analysis}, convergence, time complexity, and space complexity of the proposed algorithm as well as a sample scenario of the algorithm are analyzed.
Experimental results on standard optimization benchmarks are reported in section \ref{Section_Experiments}.
Using PMSO algorithm for partially shaded solar PV array is proposed is Section \ref{Section_Solar}. 
Finally, Section \ref{Section_Conclusion} concludes the paper.

\section{Introducing Gammarus}\label{Section_Intoducing_Gammarus}
A special type of hard-shell creature (Malacostraca) is divided into two categories: (I) Amphipoda and (II) Isopoda \cite{vernberg1983biology}. Amphipoda is mostly found in sea beds, fresh waters, salty waters, wells, oceans, lakes, and beals. It usually exists in the sediments, on the rocks of seas, or in the sea edges \cite{pennak1953fresh}.
 
Gammarus is placed in Amphipoda category. Several pictures of Gammarus are illustrated in figure \ref{fig_gammarus}. Various types of Gammarus exist, and one of the well-known types is the one that is found in southern coast of Caspian sea in Iran. There are four types of Gammarus found in Caspian coast of Iran: (I) Pontogammarus Maeoticus, (II) Pontogammarus borcea, (III) Obesogammarus crassus, and (IV) Gammarus aequicauda \cite{shamsaei2009effects}.

\begin{figure}[!t]
\centering
\begin{subfigure}[b]{0.21\textwidth}
\centering
\includegraphics[width=1.5in]{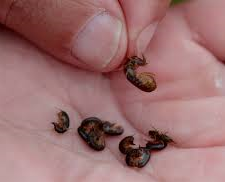} 
\caption{}
\end{subfigure}
\begin{subfigure}[b]{0.21\textwidth}
\centering
\includegraphics[width=1.5in]{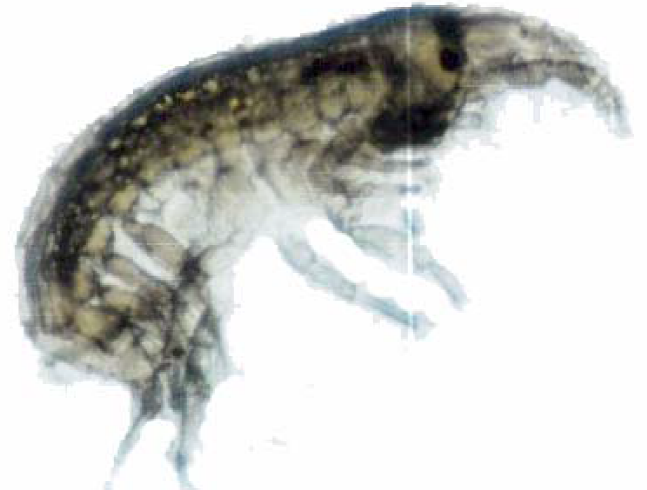}
\caption{}
\end{subfigure}
\caption{(a) Gammarus creature, (b) Gammarus body shape (credit of image: \cite{Abedian2003Analyzing}).}
\label{fig_gammarus}
\end{figure}

The material of Gammarus body consists of 48.4\% protein and 5.33\% lipid \cite{Choubert1995}. Gammarus usually reproduces by spawning on around September. Their percentage of protein and lipid increases and decreases right before and after spawning \cite{shamsaei2009effects}.
Gammarus has an average length of 12 millimeters and an average thickness of three millimeters \cite{ghareyazie2012studying,Seif2004Chemical}. The food of this creature is mostly organic materials and dead bodies of creatures. Gammarus has a very important role in food chain of aquatics such as fish. Its dried body is used as the food of aquarium fish \cite{Seif2004Chemical}. 
Gammarus has two antennas and several legs named gnathopods, pereopods, peleopods and uropods. Its shape of body is convex \cite{Azadkar2014Study}. See figure \ref{fig_body} for better visualization of its body parts.

\begin{figure}[!t]
\centering
\includegraphics[width=3in]{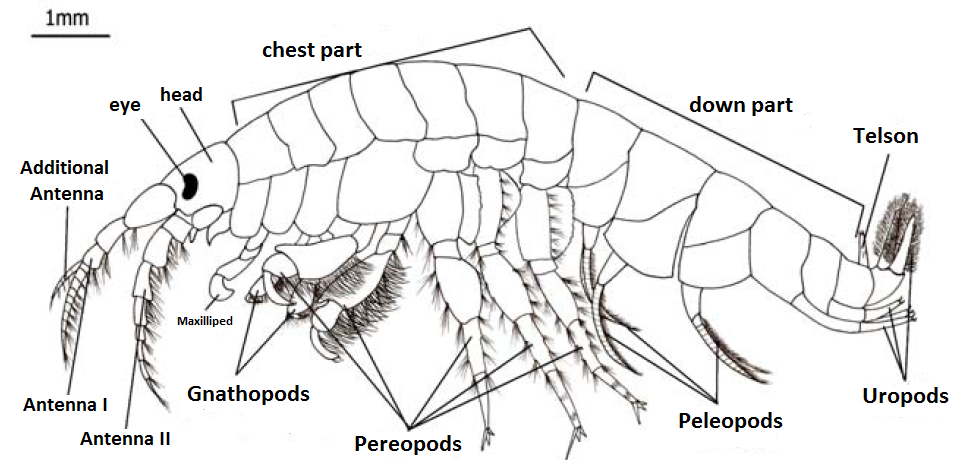}
\caption{Different parts of Gammarus body (image taken from \cite{Azadkar2014Study} with some modifications).}
\label{fig_body}
\end{figure}

\begin{figure*}[!t]
\centering
\includegraphics[width=6.4in]{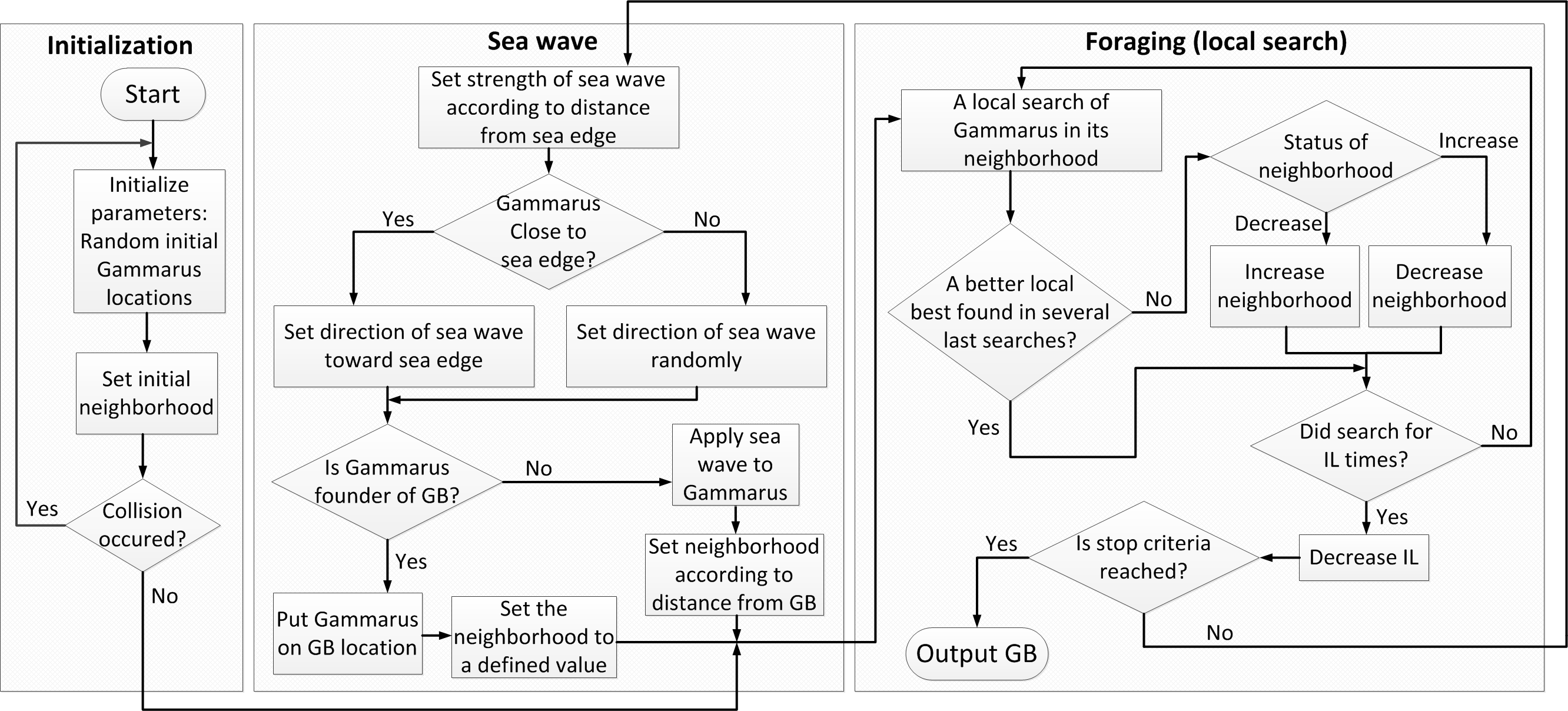}
\caption{Flow chart of PMSO algorithm.}
\label{fig_flowchart}
\end{figure*}

\begin{table}[!t]
\renewcommand{\arraystretch}{1.3}  
\caption{Symbols used in PMSO algorithm}
\label{table_symbols}
\centering
\scalebox{0.8}{    
\begin{tabular}{c l}
\hline
\hline
\multicolumn{1}{c}{\textbf{Symbol}}   & \multicolumn{1}{c}{\textbf{Meaning}} \\
\hline
$U(\alpha, \beta)$ & Uniform random number in the range of $[\alpha, \beta]$\\
\hline
NG & Number of Gammarus individuals\\
\hline
$\text{G}_i$ & $i^{th}$ Gammarus location in landscape\\
\hline
$X$ & Location in landscape (determined by vector $X$ from origin)\\
\hline
$D$ & Number of dimensions of landscape (search space)\\
\hline
$X(d)$ & Dimension $d$ of location $X$ in landscape\\
\hline
IG & Number of global iterations\\
\hline
IL & Number of iterations of local search\\
\hline
$\text{N}_i$ & Neighborhood of  $i^{th}$ Gammarus\\
\hline
$\text{S}_\text{IL}$ & Step of decrementing number of iterations of local search\\
\hline
$\text{L}_\text{IL}$ & Lower bound of decrementing number of iterations of local search\\
\hline
$\text{S}_\text{N}$ & Step of incrementing/decrementing neighborhood\\
\hline
$\text{L}_\text{N}$ & Lower bound of decrementing neighborhood\\
\hline
$\text{U}_\text{N}$ & Upper bound of incrementing neighborhood\\
\hline
$\text{W}_i$ & The wave vector affecting the $i^{th}$ Gammarus\\
\hline
$\Theta_i$ & Angle vector of wave, affecting $i^{th}$ Gammarus\\
\hline
GB & Global best, found so far\\
\hline
$\text{LB}_i$ & Local best in each global iteration, for $i^{th}$ Gammarus\\
\hline
B & Length of queue buffer\\
\hline
F & Fraction of distance from GB to be considered as neighborhood\\
\hline
$\text{St}_i$ & Status of $i^{th}$ Gammarus for neighborhood adaptation\\
\hline
$\text{N}_{\text{GB}}$ & Initial neighborhood of founder of GB at the first of every global iteration\\
\hline
\hline
\end{tabular}
}
\end{table}

\section{PMSO Algorithm}\label{Section_Algorithm}

In PMSO algorithm, several Gammarus individuals are used as particles in a swarm searching for global best optima. 
The flowchart of PMSO algorithm is shown in figure \ref{fig_flowchart}. In addition, the pseudo-code of PMSO algorithm can be seen in Algorithm \ref{the_algorithm}. 
In the following, the details and different steps of this algorithm are explained, and the natural inspirations are addressed as well.
Symbols and notations used in this algorithm are summarized in table \ref{table_symbols}.

\subsection{Initialization}

\subsubsection{Random exploration}

As seen in figure \ref{fig_flowchart} and lines \ref{algorithm1_start_initialization}-\ref{algorithm1_end_initialization} of algorithm, Gammarus individuals are primarily explored randomly in the landscape. Random initialization of Gammarus individuals can be formulated as,
\begin{align}
G_i(d) & = U(-l_d,l_d), \label{equation1} \\
& -l_d \leq \text{landscape}(d) \leq l_d, \quad i = \{1,\dots,\text{NG}\}, \nonumber
\end{align}
where $G_i(d)$ is the location of Gammarus in dimension $d$ of landscape ($\text{landscape}(d)$), $[-l_d,l_d]$ is the bound of search space in dimension $d$, and NG denotes population of Gammarus swarm. 

\begin{algorithm}[!t]
\caption{PMSO Algorithm}\label{the_algorithm}
\begin{algorithmic}[1]
\State \textbf{Initialize} NG, IL, IG, initial $\text{N}_i$, $\text{S}_\text{IL}$, $\text{L}_\text{IL}$, $\text{S}_\text{N}$, $\text{L}_\text{N}$, $\text{U}_\text{N}$, B, F, and $\text{N}_{\text{GB}}$
\For{$i$ = 1 to NG}   \label{algorithm1_start_initialization}
	   \While{collision occurred}\label{algorithm1_start_collision}
		\State $\text{G}_i \gets U(-l, l)$
	\EndWhile\label{algorithm1_end_collision}
\EndFor  \label{algorithm1_end_initialization}
\While{stop criterion is not reached} \label{algorithm1_globalIteration}
	\For{$i$ = 1 to NG}
		\If{it is not first iteration} 
			\State $|\text{W}_i| \gets U(0,1) \times |\text{GB} - \text{G}_i|$  \label{algorithm1_waveStrength}
		    \If{$\text{G}_i$ is close to sea edge}
		    	\State $\Theta_i \gets $ equation (\ref{equation_angleTowardGlobalBest})
		    \Else
		    	\State $\Theta_i \gets $ equation (\ref{equation_angleRandom})
		    \EndIf
		    \State $\text{W}_i \gets $ equations (\ref{equation_waveVectorComponents}) and (\ref{equation_waveVector})
		    \If{this Gammarus $i$ is founder of GB} \label{algorithm1_start_updateLocation}
		    	\State $\text{G}_i \gets $ GB
		    \Else
				\State $\text{G}_i \gets \text{G}_i + \text{W}_i$
			\EndIf  \label{algorithm1_end_updateLocation}
		\EndIf
		\State $\text{N}_i \gets \text{F} \times |\text{GB} - \text{G}_i|$ \label{algorithm1_initialNeighborhood}
		\For{$j$ = 1 to IL} \label{algorithm1_localIteration}
			\State $\text{St}_i \gets $ NoChange \label{algorithm1_status}
			\State buffer $\gets$ Local Search \label{algorithm1_bufferLocalSearch}
			\If{$j \geq $ B} \label{algorithm1_start_changeNeighborhoodStatus}
				\If{$\text{LB}_i$ found in latest B local searches}
					\If{$\text{St}_i$ is NoChange or Decrease}
						\State $\text{St}_i \gets$ Increase
						\State $\text{N}_i \gets$ max($\text{N}_i + \text{S}_\text{N}, \text{U}_\text{N}$) 
					\ElsIf{$\text{St}_i $ is Increase}
						\State $\text{St}_i \gets$ Decrease
						\State $\text{N}_i \gets$ min($\text{N}_i - \text{S}_\text{N}, \text{L}_\text{N}$) 
					\EndIf
				\EndIf
			\EndIf  \label{algorithm1_end_changeNeighborhoodStatus}
		\EndFor
		\If{$\text{LB}_i$ is better than GB} \label{algorithm1_start_updateGB}
			\State GB $\gets \text{LB}_i$
		\EndIf  \label{algorithm1_end_updateGB}
	\EndFor
	\State $\text{IL} \gets$ max($\text{IL} - \text{S}_\text{IL}, \text{L}_\text{IL}$) \label{algorithm1_changeIL}
\EndWhile
\end{algorithmic}
\end{algorithm}

\begin{figure*}[!t]
\centering
\includegraphics[width=5in]{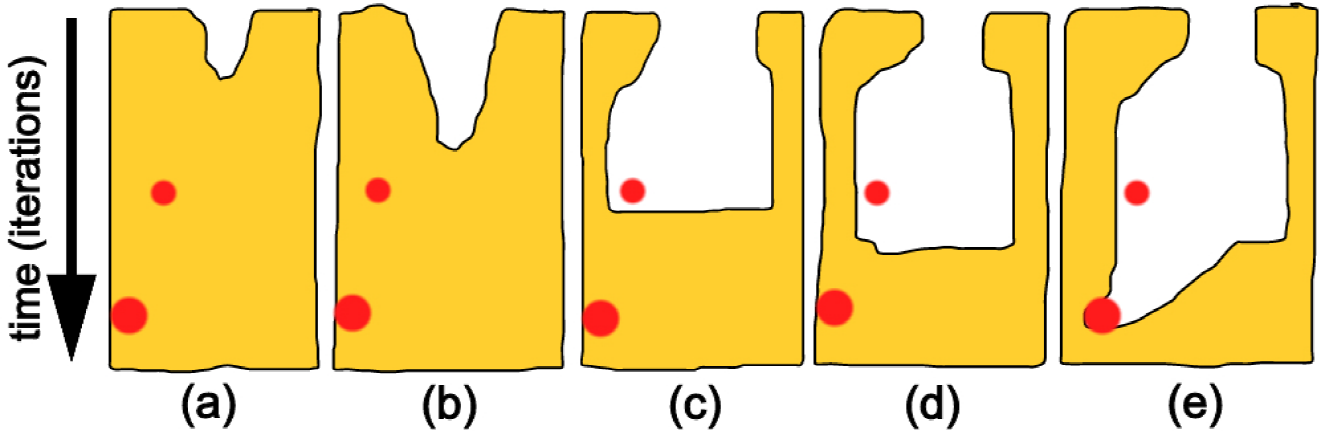}
\caption{Local search by Gammarus for the sake of foraging. The red dots are the nutrients (local bests).}
\label{fig_forage}
\end{figure*}

\subsubsection{Collision check}

After locating gammarus individuals randomly in landscape, collisions of them are checked and if occurs, their locations are set another time. For better time efficiency, this collision check is performed after locating every Gammarus individual rather than checking after locating all particles and re-locating all particles in case of collision (see lines \ref{algorithm1_start_collision}-\ref{algorithm1_end_collision} in algorithm). For having a measure of collision, neighborhood of each Gammarus is defined as a distance from it which can be a surrounding hyper-sphere or hyper-cube in $d$-dimensional space. Distance of locations $X_1$ and $X_2$ in landscape can be calculated using Euclidean distance,
\begin{equation}
\label{equation2}
\text{distance} (X_1, X_2) = |X_1 - X_2| = \sqrt{\sum_{d=1}^D \Big(X_1(d) - X_2(d) \Big)^2}.
\end{equation}
The initial neighborhood of all Gammarus particles are set equivalently to a number which is a hyper-parameter and should be set according to problem.

The benefit of this collision check is to have better initial exploration in landscape for searching different parts as much as possible.
Moreover, the avoidance of collision is inspired by the fact that Gammarus individuals do not like to get very close to each other. We have examined it ourselves with a container of water including several Gammarus creatures being in sand. When we shook the container, we observed that each Gammarus came out of the sand and after we put down the container, they tried to find a place in the sand to rest. However, whenever they landed to the place of another Gammarus, they went up again and tried another time to find a new place in the sand.

\subsection{Foraging}

\subsubsection{Natural inspiration}

As the next step, for the sake of exploitation, each Gammarus searches locally its neighborhood for several iterations. The local search of each Gammarus in its neighborhood is inspired by its foraging for which it searches around in the sand it has landed on. Figure \ref{fig_forage} depicts the local search for foraging in sand.  
As can be seen in this figure, there are two pieces of nutrients in the sand which the Gammarus is looking for to forage. At first, the Gammarus falls on the region to search (figure \ref{fig_forage}(a)). After the Gammarus searches within its neighborhood for several iterations (figure \ref{fig_forage}(b)), it finds out that not any nutrient has been existed in that region, so it increases its neighborhood for search and finds a nutrient (see figure \ref{fig_forage}(c)). The nutrients are modeled by local bests found by Gammarus individuals. The Gammarus keeps searching with the same increased neighborhood for several iterations; however, it does not find any nutrient again (figure \ref{fig_forage}(d)). So, it guesses that it might be better to search in a specific direction and it might luckily find a nutrient with less effort (figure \ref{fig_forage}(e)). 

\subsubsection{Exploitation}

Notice that in this algorithm, two types of iterations exist: (I) global iteration that is the iteration on the whole algorithm (line \ref{algorithm1_globalIteration} of algorithm), and (II) local iteration that is the iteration of local search of each Gammarus in its neighborhood (line \ref{algorithm1_localIteration}). 
The number of local iterations in every global iteration is denoted as IL. 
The aim of local searches performed by Gammarus individuals is to perform exploitation in the locations of landscape on which Gammarus particles are located in a global iteration.
The local search of particle is inspired by foraging of Gammarus which looks for nutrient in the sand on which it has landed for a moment before facing a sea wave. 

\subsubsection{Adaptive neighborhood}\label{section_adaptive_neighborhood}

Inspired by the explained natural foraging behavior, the neighborhood of Gammarus individuals are set to be adaptive. 
A status label, denoted as $\text{St}_i$, is defined for each Gammarus. This status is initially set to NoChange at the local search iterations within every global iteration (see line \ref{algorithm1_status} in algorithm), and the Gammarus searches within its initial neighborhood. 
Note that this initial neighborhood is subject to change in every global iteration, and setting the initial neighborhood for each Gammarus is addressed in the next sections. 
A queue buffer with length B is defined per each Gammarus in which the results of latest B local searches of that Gammarus are stored (line \ref{algorithm1_bufferLocalSearch} of algorithm). This buffer is emptied at the first of each global iteration. If the local best of Gammarus in that global iteration, which is $\text{LB}_i$, is found in the latest B searches stored in buffer, it means the procedure of searching is progressive and Gammarus does not change its status $\text{St}_i$ and searches within the previously set neighborhood. Otherwise, it changes its status to Decrease or Increase if its previous status is Increase or Decrease, respectively (see lines \ref{algorithm1_start_changeNeighborhoodStatus}-\ref{algorithm1_end_changeNeighborhoodStatus} of algorithm). Accordingly, the neighborhood of Gammarus individual is decreased or increased, respectively. Lower and upper bounds, respectively denoted as $\text{L}_\text{N}$ and $\text{U}_\text{N}$, are also considered for decreasing and increasing neighborhood in order to prevent unbounded changes.

It is worth to mention that adaptive neighborhood does have strong impact especially in high-dimensional search spaces. Therefore, in problems with small number of dimensions, the neighborhood can be set fixed in the iterations for the sake of simplicity.

\subsubsection{Number of iterations in local search}

This fact should be considered that by going forward in the algorithm, we expect the algorithm to get closer to the actual global best; therefore, the number of local iterations can be decreased in every global iteration,
\begin{equation}
\text{IL}_i \leftarrow \text{max}(\text{IL}_i - \text{S}_\text{IL}, \text{L}_\text{IL}),
\end{equation}
where $\text{S}_\text{IL}$ and $\text{L}_\text{IL}$ are, respectively, the step and lower bound of changing IL (see line \ref{algorithm1_changeIL} of algorithm).
This step is performed to progress the speed of algorithm. The inspiration of this step is that as a Gammarus gets closer to the sea edge which is a representative of GB, the sand gets drier and harder to get in. Gammarus is willing to go inside of the sand to forage in it. Thus, by getting closer to the sea edge, it can go less into sand because of its more dryness. Moreover, there are more nutrients in the sand of sea edge and thus the Gammarus does not need to search a lot.

\begin{figure*}[!t]
\centering
\includegraphics[width=5in]{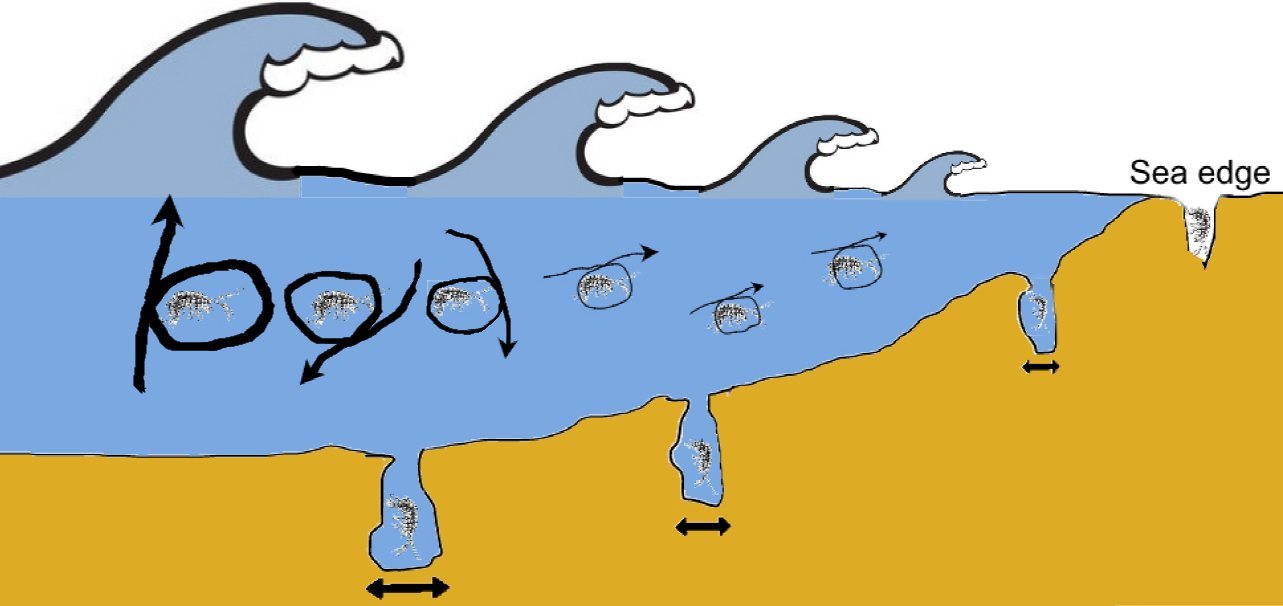}
\caption{Sea waves, sea edge, sea bed, and Gammarus creatures in sea and sea edge.}
\label{fig_sea}
\end{figure*}

\subsection{Sea waves}

\subsubsection{Natural inspiration}

Figure \ref{fig_sea} illustrates a sea with its sea edge and sea bed. As can be seen in this figure, the farther the Gammarus is from the sea edge, the stronger the sea wave becomes. In PMSO algorithm, the sea edge is modeled by global best found so far denoted by GB. Hence, a Gammarus which has found GB, has reached the sea edge and can rest in peace from the sea waves and seek for the nutrients in sea edge. The fact that lots of Gammarus creatures can be seen in in some particular regions of Caspian sea edges supports the claim that Gammarus individuals want to reach the sea edge and rest from the sea waves.

Moreover, as can be seen in figure \ref{fig_sea}, the Gammarus creatures which are close to sea edge are affected by the relatively weak sea waves toward the sea edge. However, far enough from the sea edge, the sea currents behave randomly in direction, although their strengths are impacted by the strength of sea waves. Therefore, the Gammarus particles, far away from the sea edge, are moved randomly but with the strength proportional to their distance from sea edge.

\subsubsection{Exploration}

In every global iteration excluding the first one, for the sake of updating locations of Gammarus population and exploration, they should be shaken in the landscape. Different metaheuristic algorithms perform this step in different ways. In PMSO algprithm, this step is inspired by occurrence of the sea wave as explained before. 
According to distance of each Gammarus from the global best found so far, the strength of wave is determined. The lower the distance gets, the less the strength of wave becomes (see line \ref{algorithm1_waveStrength} in algorithm), 
\begin{equation}
\label{equation5}
|\text{W}_i| \propto |\text{GB} - \text{G}_i|.
\end{equation}
This moves the more distant Gammarus individuals more, which is expected because the Gammarus individuals which are so far from GB are better to be shaken more in comparison to particles close to GB.

As explained before, inspired by nature, a number of Gammarus individuals, which are close to GB, are affected by waves toward GB. The number of particles which are considered to be close to GB, denoted as NC, is a hyper-parameter and should be determined according to optimization problem. For determining whether a Gammarus is close to GB or not, the distances of all Gammarus individuals are computed from GB, and thereafter the distances are sorted in ascending order. The several first number of particles are considered to be close to GB and the rest are far from it. This number is also a hyper-parameter.

The wave angle (direction), affecting the Gammarus individuals which are close to GB, is set in this way:
Suppose for $i^{th}$ Gammarus individual, $V_i$ is the vector connecting $\text{G}_i$ to GB, which means $V_i = \text{GB} - \text{G}_i$, and $V_i(j)$ denotes the component in the $j^{th}$ dimension of this vector. Notice that $j = \{1,\dots,D\}$. The $(D-1)$ number of angles of this vector in different dimensions (the angles of sea wave denoted by $\Theta_i(d)$) can be computed as \cite{blumenson1960derivation,Shih2014ndimensional},   
\begin{equation}\label{equation_angleTowardGlobalBest}
\left\{
      \begin{array}{l}
        \Theta_i(d) = \text{tan}^{-1} \frac{\sqrt{\sum_{j=d+1}^D (V_i(j))^2}}{V_i(d)}, \quad d = \{1,\dots,D-2\}\\
        \\
        \Theta_i(D-1) = 2 \times \text{tan}^{-1} \frac{V_i(D)}{V_i(D-1) + \sqrt{(V_i(D))^2 + (V_i(D-1))^2}}.
      \end{array}
    \right.
\end{equation}

On the other hand, the Gammarus individuals, which are far away from GB, are affected by sea waves with random directions as its natural inspiration was mentioned previously. Hence, for these particles, the $(D-1)$ number of angles of sea wave are computed as,
\begin{equation}\label{equation_angleRandom}
\Theta_i(d) = U(0,2\pi), \quad d = \{1,\dots,D-1\}.
\end{equation}
This moves the more distant Gammarus individuals more, which is beneficial because the Gammarus individuals which are so far from GB are better to be used for exploring other parts of landscape rather than going toward GB.

After computing the strength and direction of sea wave $W_i$ affecting $i^{th}$ Gammarus, the vector of sea wave is found as \cite{blumenson1960derivation,Shih2014ndimensional},
\begin{equation}\label{equation_waveVectorComponents}
\left\{
      \begin{array}{l}
        W_i(1) = |W_i| \text{ cos}(\Theta_i(1)),\\
        \\
        W_i(d) = |W_i| \text{ cos}(\Theta_i(d))\prod_{j=1}^{d-1}\text{sin}(\Theta_i(j))\\
        \quad \quad \quad \quad \quad \quad \quad \quad \quad \quad \quad \quad d = \{2,\dots,D-1\},\\
        \\
        W_i(D) = |W_i| \text{ sin}(\Theta_i(D-1))\prod_{j=1}^{D-2}\text{sin}(\Theta_i(j)).
      \end{array}
    \right.
\end{equation}
\begin{equation}\label{equation_waveVector}
W_i = [W_i(1), W_i(2), \dots, W_i(D)]^T.
\end{equation}
This wave affects all Gammarus individuals except the Gammarus which has found GB so far (see lines \ref{algorithm1_start_updateLocation}-\ref{algorithm1_end_updateLocation} of algorithm). The Gammarus, which is founder of GB, is put in the GB exactly in order to better search around GB for a possible better solution. This fact has a biological support because GB is modeling sea edge in which Gammarus creature rests from the sea waves. 

It is worth to mention that randomly selecting the angles for distant Gammarus individuals helps to escape from the local bests in some cases much better than PSO, in which all particles are moved toward their global and local bests. That is because the particles are given a chance to explore more spaces of landscape and perhaps find the global best in outlying positions. 

\subsection{Initial neighborhood}

\subsubsection{Natural inspiration}

According to figure \ref{fig_sea}, it can be seen that the farther the Gammarus lands on sea bed from the sea edge, the softer the sand is, and thus the Gammarus has more freedom to search for food. Therefore, the neighborhood of Gammarus creature in sea bed is larger in the far distances from sea edge.

\subsubsection{Initial neighborhood determination}

At the start of every global iteration, the neighborhood of each Gammarus individual should be set for the local search which was explained in previous sections. In the first global iteration, all particles are given a fixed pre-defined neighborhood. This initial neighborhood is used for both collision check and local search in the first iteration. In the preceding global iterations, however, the neighborhoods of Gammarus population are determined differently for the local search: At the start of every global iteration, the neighborhood of each Gammarus is set according to its distance from GB. Inspired by nature, as was explained, the more distant the Gammarus is from GB, the larger its initial neighborhood should be. Hence, the initial neighborhood $\text{N}_i$ is determined as,
\begin{equation}\label{equation_neighborhoodAdaptation}
\text{N}_i \leftarrow \text{F} \times |\text{GB} - \text{G}_i|
\end{equation}
where F is a hyper-parameter, and is the fraction of distance from GB to be considered as neighborhood (see line \ref{algorithm1_initialNeighborhood} of algorithm).
This step is performed after applying the sea waves and moving the particles as shown in figure \ref{fig_flowchart}.
After initializing $\text{N}_i$ at the first of each global iteration and after applying sea waves, the neighborhood changes adaptively during local search according to Section \ref{section_adaptive_neighborhood}.

Notice that, as shown in figure \ref{fig_flowchart}, the initial neighborhoods of all particles, except the founder of GB, are determined by equation (\ref{equation_neighborhoodAdaptation}). The initial neighborhood of founder of GB at the first of each global iteration is set to be $\text{N}_{\text{GB}}$ which is a pre-defined hyper-parameter.
Moreover, note that similar to adaptive neighborhood property, determining initial neighborhood according to distance from GB has more impact in high-dimensional search spaces. Thus, if the number of dimensions of search space is not very high, the initial neighborhood of particles can be considered fixed.

\subsection{Updating global best \& checking stop criterion}

At the end of every global iteration, the local best of each Gammarus is compared to the global best so far (GB). If a better answer has been obtained in this global iteration, GB is updated (lines \ref{algorithm1_start_updateGB}-\ref{algorithm1_end_updateGB} of algorithm). 
Thereafter, the stop criteria is checked and if it is reached, algorithm is terminated and the best answer found so far (GB) is returned as the global optima. The stop criterion can be one of these conditions: (I) convergence criteria, which is whether the difference between last two global best answers is small enough, (II) time out criteria, (III) reaching the bound of number of global iterations, or (IV) reaching the maximum number of allowed function evaluations.

\begin{figure*}[!t]
\centering
\includegraphics[width=7in]{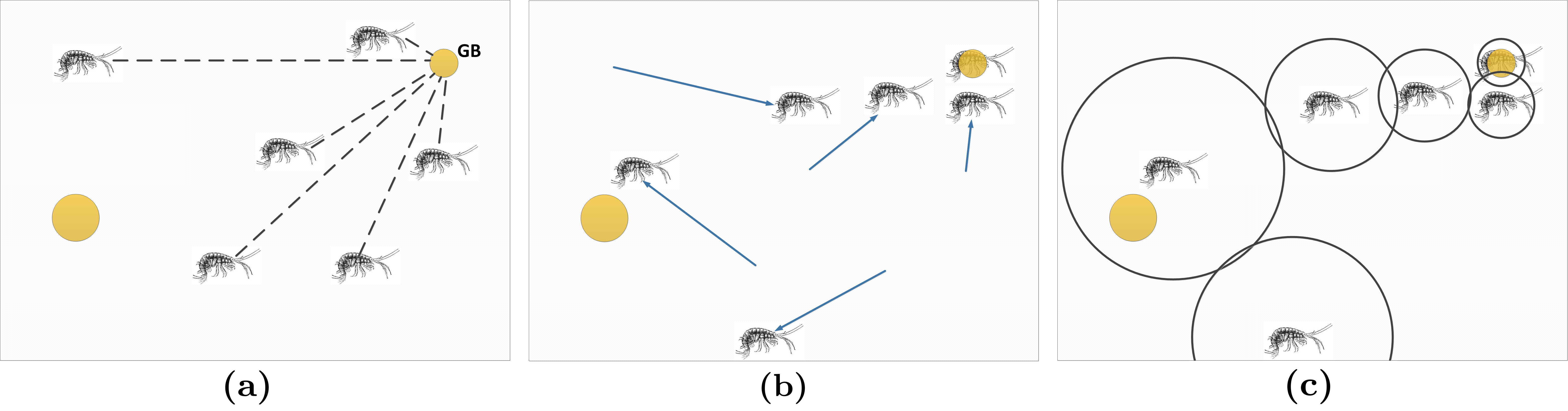}
\caption{A part of a 2D sample scenario in PMSO algorithm.}
\label{fig_scenario}
\end{figure*}

\section{Analysis of algorithm}\label{Section_Analysis}

\subsection{Convergence of particles}

It can be proved that if the best answer found so far is the actual global optima and some particles do not go toward it (especially the ones which are distant from GB and are shaken randomly), after infinite global iterations of algorithm, the particles will converge to it because as they get farther from it, the strength of wave gets bigger. 
\begin{theorem}
If global best found so far, denoted by GB, is actually the global best, all Gammarus individuals converge to it after infinite number of iterations.
\end{theorem}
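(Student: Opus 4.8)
The plan is to track, for each Gammarus, the scalar distance $r_i^{(t)} = |\text{GB} - \text{G}_i|$ at global iteration $t$ and to show $r_i^{(t)} \to 0$. The founder of GB is reset to $\text{GB}$ at every iteration (lines~\ref{algorithm1_start_updateLocation}--\ref{algorithm1_end_updateLocation}), so there $r = 0$ and nothing is to prove. Everything else reduces to controlling how the wave update $\text{G}_i \gets \text{G}_i + \text{W}_i$, with $|\text{W}_i| = U(0,1)\,|\text{GB}-\text{G}_i|$, changes $r_i$.

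First I would dispose of the particles classified as close to GB, whose wave points along $V_i = \text{GB}-\text{G}_i$ by~(\ref{equation_angleTowardGlobalBest}). For such a particle the displacement is collinear with the direction to GB and has magnitude $|\text{W}_i| = u\, r_i^{(t)}$ with $u = U(0,1) \in [0,1]$; since $u \le 1$ the step never overshoots, so $r_i^{(t+1)} = (1-u)\, r_i^{(t)}$. Because $1-U(0,1)$ is again uniform on $[0,1]$, this is a genuine random contraction with $\mathbb{E}[r_i^{(t+1)} \mid r_i^{(t)}] = \tfrac12 r_i^{(t)}$ and $\mathbb{E}[\log(1-u)] = -1 < 0$; applying the strong law to $\sum_s \log(1-u_s)$ then gives $r_i^{(t)} \to 0$ almost surely. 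This is the easy half and yields geometric convergence for any particle that stays in the close regime.

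The delicate part is the particles far from GB, which by~(\ref{equation_angleRandom}) are shaken in a uniformly random direction with the same magnitude $u\, r_i^{(t)}$. Here the magnitude bound alone is insufficient: writing $(r_i^{(t+1)})^2 = (r_i^{(t)})^2 - 2\,\langle V_i, \text{W}_i\rangle + |\text{W}_i|^2$ and averaging over the isotropic direction kills the cross term, leaving $\mathbb{E}[(r_i^{(t+1)})^2] = (1 + \mathbb{E}[u^2])\,(r_i^{(t)})^2 = \tfrac43 (r_i^{(t)})^2$, so a purely random step is \emph{expansive} in expectation and cannot converge on its own. Convergence of these particles must therefore be extracted from the classification dynamics: the set labelled far is the complement of the NC nearest, so as the close particles collapse onto GB the ranking is continually reshuffled. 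The key step is to show that every particle almost surely re-enters the close set infinitely often, and that once there it contracts by the factor above; combining a recurrence argument with the contraction estimate (a Borel--Cantelli / sub-multiplicative bound on the product of per-visit factors) would then force $r_i^{(t)} \to 0$. The bounded landscape $[-l_d, l_d]^D$ is what makes the random excursions recurrent rather than transient, so I would invoke it explicitly.

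I expect this recurrence-into-the-close-regime step to be the \emph{main obstacle}, and the point where a fully rigorous treatment is hardest: the relative ``NC nearest'' rule couples the otherwise independent trajectories of all NG particles, so one cannot analyze a single particle in isolation. A cleaner alternative I would consider is to strengthen the hypotheses, e.g.\ assume the close/far split is by an absolute distance threshold that eventually contains every particle, or that NC grows until the whole swarm is pulled toward GB; under either assumption the swarm reduces to the contracting case and the theorem follows at once from the first estimate. I would state explicitly which assumption is in force, since under the literal ``NC nearest'' rule the farthest particles never deterministically enter the toward-GB regime, and convergence of all individuals then holds only in the probabilistic, infinitely-often sense.
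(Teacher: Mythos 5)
Your decomposition into particles moved toward GB and particles shaken in random directions is exactly the paper's, and your treatment of the first group strengthens the paper's one-line ``it is obvious'': the contraction $r^{(t+1)}=(1-u)\,r^{(t)}$ with $u\sim U(0,1)$ and the law-of-large-numbers argument on $\sum_s\log(1-u_s)$ is a correct, fully rigorous version of what the paper merely asserts. Your observation that an isotropic step of magnitude $u\,r$ gives $\mathbb{E}[(r^{(t+1)})^2]=\tfrac{4}{3}(r^{(t)})^2$, hence is expansive in mean square, is also correct, and it is precisely the point the paper's proof skates over.

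The gap is that you stop where the theorem actually has to be proved. For the far particles you reduce everything to a recurrence claim --- every particle re-enters the close set infinitely often and contracts enough on those visits --- and then explicitly decline to establish it, offering instead to change the hypotheses (an absolute distance threshold, or NC growing to NG). That is not a proof of the stated theorem: under the literal NC-nearest rule the farthest particle is never guaranteed to be reclassified as close, and your own computation shows its unconditioned dynamics drift outward, so the conclusion hinges entirely on the step you leave open. For comparison, the paper closes this hole by fiat. It argues by contradiction that non-convergence would force $\text{dist}_i^{j+1}\ge\text{dist}_i^j$ for all $j$ (which does not follow from non-convergence), that the random direction will at some iteration $k$ point toward GB and, because $|W_i|\propto\text{dist}_i^k$, leave $\text{dist}_i^{k+1}$ ``very small'' (which requires both the angle to be nearly exact and $u$ near $1$, neither of which is argued), and that the particle then ``remains around GB'' (ignoring that a still-far-classified particle receives another random-direction kick next iteration). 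So the paper asserts exactly the recurrence-plus-absorption mechanism you identify as the main obstacle, without your honesty about why it is an obstacle. Your write-up is the more faithful diagnosis of what a real proof would need, but as it stands neither your argument nor the paper's establishes convergence of the far particles without additional assumptions.
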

\begin{proof}
For the Gammarus individuals which are close to sea edge, it is obvious because they are moved toward the GB.
For the distant Gammarus individuals, the following proof is proposed.
Proof by contradiction: As GB is the actual global best, it will not be replaced any more by another location as the global best found so far. Suppose that Gammarus individuals do not converge to GB after infinite number of iterations. If $\text{dist}_i^j$ denotes $|V_i|$ or $|\text{GB} - \text{G}_i|$ in $j^{th}$ global iteration, this means that $\text{dist}_i^{j+1} \geq \text{dist}_i^j$ for every Gammarus individual and in all iterations. In other words, the particles are getting farther and farther from the GB. However, the direction (angle) of sea wave is determined randomly and thus might result Gammarus individual to move toward the GB in one of iterations (e.g. $k^{th}$ iteration). Moreover, as the strength of wave is proportional to $\text{dist}_i^k$, particle will get close to GB in iteration $k$. Thereafter, $\text{dist}_i^{k+1}$ is very small and therefore, the wave affecting that particle will not be strong anymore, and the Gammarus will remain around GB. It means that Gammarus has converged to GB and this contradicts the assumption. 
\end{proof}

Note that there is no worry about the finite number of global iterations because as a new viewpoint, there is no need to collect all the particles to the best answer found so far. The particle which has found GB can search around it without the need of help from all other particles. Having several particles close to GB for helping that particle in searching around GB suffices. Notice that the founder particle of the best answer so far will face a weak wave because it is near the best answer (the sea edge) and can search around the best answer, better. 
In other words, the particle which has found GB in addition to particles close to GB will locally search for fine tuning GB, and the other far particles search far from it in landscape in hope of finding a possible answer which is better than GB and is not found yet. This delicate fact is not considered in algorithms such as PSO, and in our best knowledge, it is not thoroughly analyzed and tackled by other metaheuristic algorithms, too. In other words, different roles are given to the particles of swarm, and the particles cooperate in various roles for the goal of optimization.

\subsection{A sample scenario of algorithm}

A sample scenario is proposed in this section in order to analyze and describe PMSO algorithm. 
Figure \ref{fig_scenario} depicts a part of this sample scenario, in 2D case, for the sake of better visualization. First, the Gammarus individuals are randomly explored in landscape and having a pre-defined similar neighborhoods, they start to search locally while having adaptive neighborhoods. Assume that the local optima at the top-right corner of landscape (shown in figure \ref{fig_scenario}(a)) is found as GB in the first global iteration. Figure \ref{fig_scenario}(a) shows the start of second global iteration. As shown in this figure, the distance of each Gammarus individual is calculated from GB. 

Afterwards, the sea waves are applied to Gammarus particles as shown in figure \ref{fig_scenario}(b). It can be seen in this figure that the Gammarus, which is founder of GB, is put exactly on GB, and is not affected by sea wave. The other two Gammarus individuals, which are close to GB, are affected by waves toward GB; however, the three ones far away from GB are moved in random directions. The strengths of all sea waves, however, are determined according to their distances from GB.

Figure \ref{fig_scenario}(c) shows the step after sea waves being applied. As can be seen in this figure, the neighborhoods of Gammarus particles are set as a fraction of their distance from GB. The farther particles are having wider neighborhood so they can search more distantly in landscape hoping to find a better answer. It can be seen in figure \ref{fig_scenario}(c) that the very far Gammarus at the left does have the chance to find the actual global best because it exists in its neighborhood. Changing neighborhood adaptively will help this particle to probably find the actual global best easier. Adaptation in chaning neighborhood is not shown in figure \ref{fig_scenario} for the sake of brevity. When the left Gammarus finds the actual global best, that point will be GB thereafter, and the sea waves and neighborhoods will be determined according to that point.

As can be seen in figure \ref{fig_scenario}, if the particles are moved toward GB (or a combination of local and global bests), the actual global best might not be found at all. The random wave directions for far particles, proposed in PMSO algorithm, gives the algorithm a chance not to be trapped in local bests.

\begin{figure*}[!t]
\centering
\begin{subfigure}[b]{0.19\textwidth}
\centering
\includegraphics[width=1in]{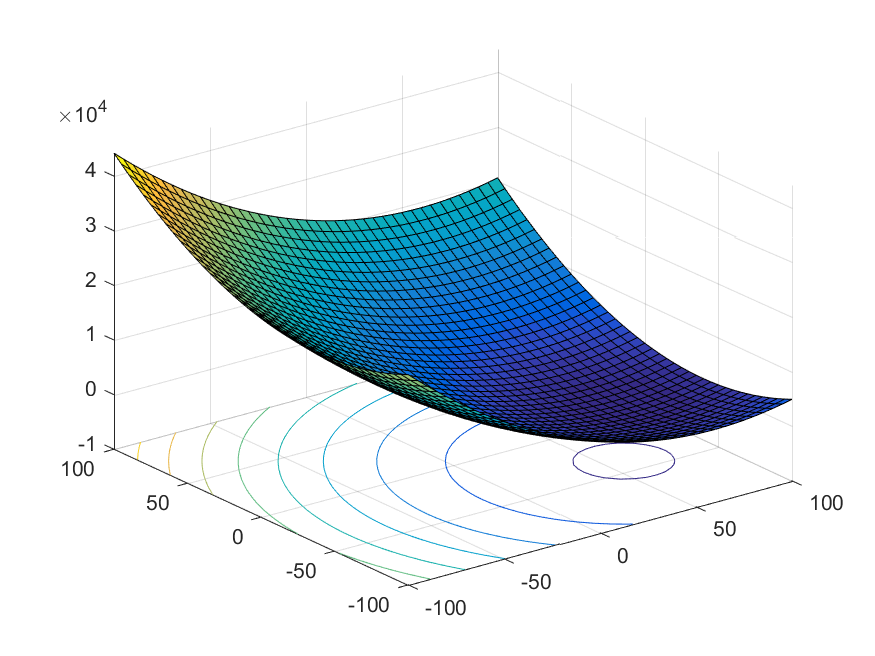} 
\caption{F1}
\end{subfigure}
\begin{subfigure}[b]{0.19\textwidth}
\centering
\includegraphics[width=1in]{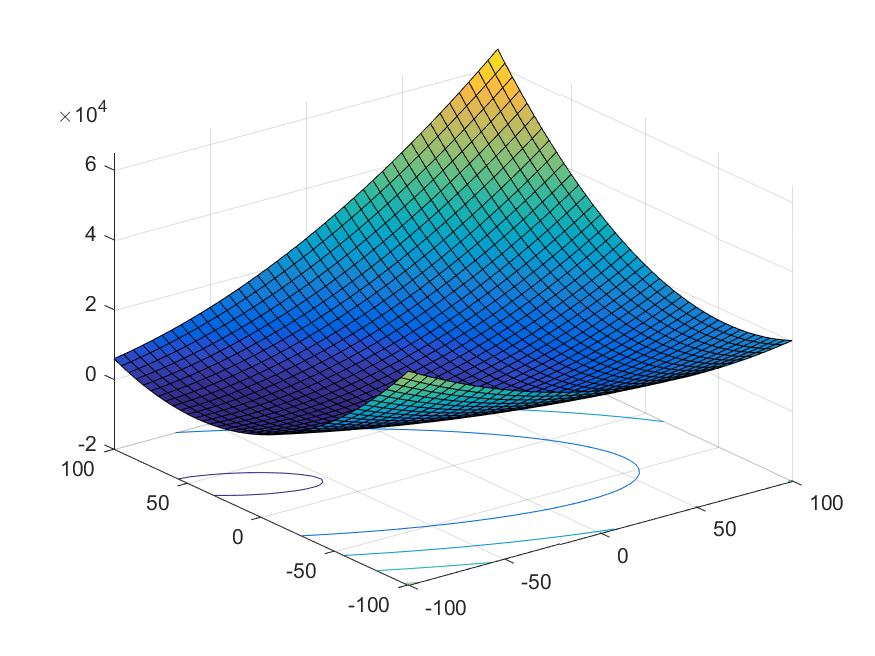}
\caption{F2}
\end{subfigure}
\begin{subfigure}[b]{0.19\textwidth}
\centering
\includegraphics[width=1in]{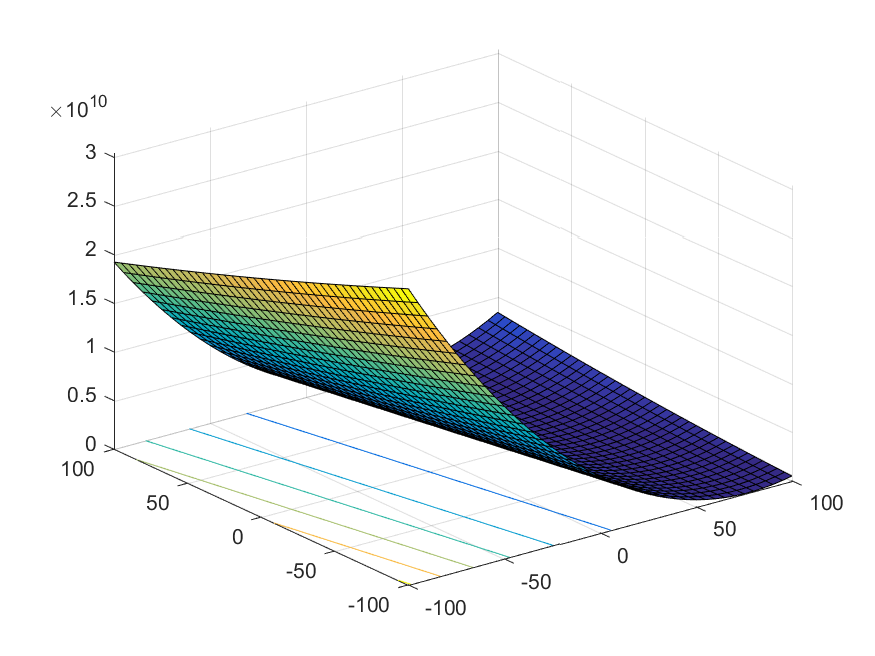}
\caption{F3}
\end{subfigure}
\begin{subfigure}[b]{0.19\textwidth}
\centering
\includegraphics[width=1in]{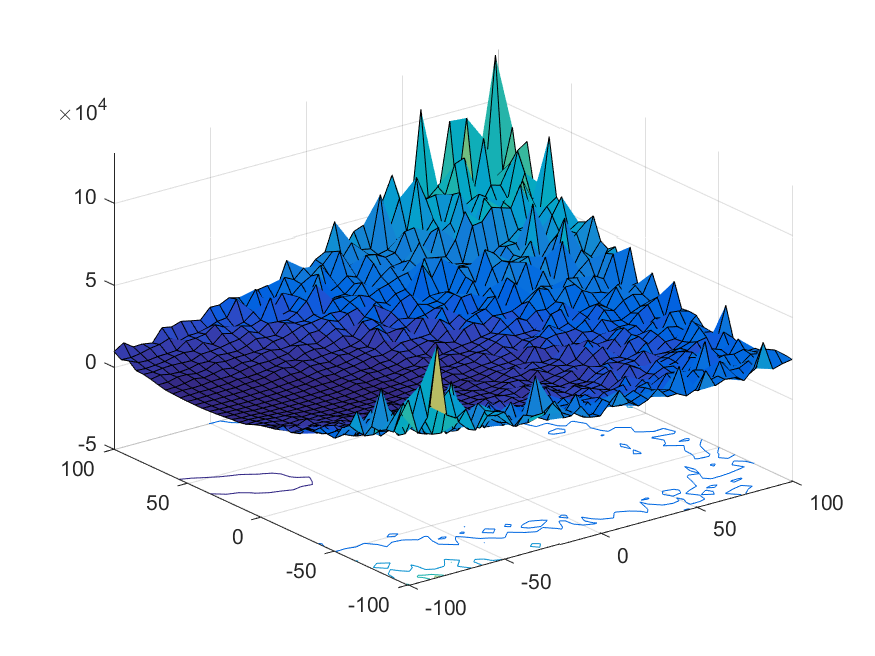}
\caption{F4}
\end{subfigure}
\begin{subfigure}[b]{0.19\textwidth}
\centering
\includegraphics[width=1in]{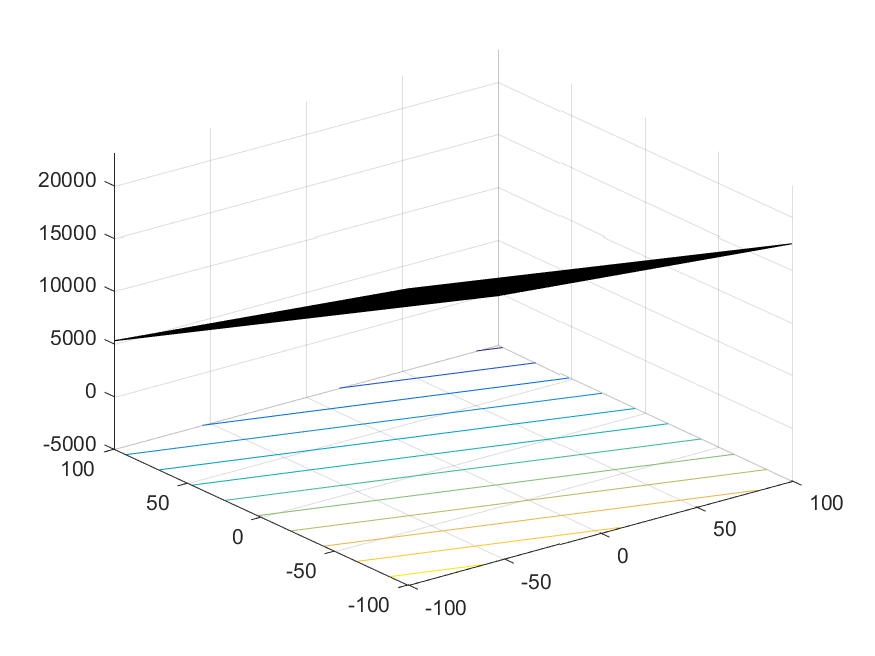}
\caption{F5}
\end{subfigure}
\caption{2D versions of CEC05 uni-modal benchmarks.}
\label{fig_uni_benchmarks}
\end{figure*} 

\begin{figure*}[!t]
\centering
\begin{subfigure}[b]{0.19\textwidth}
\centering
\includegraphics[width=1in]{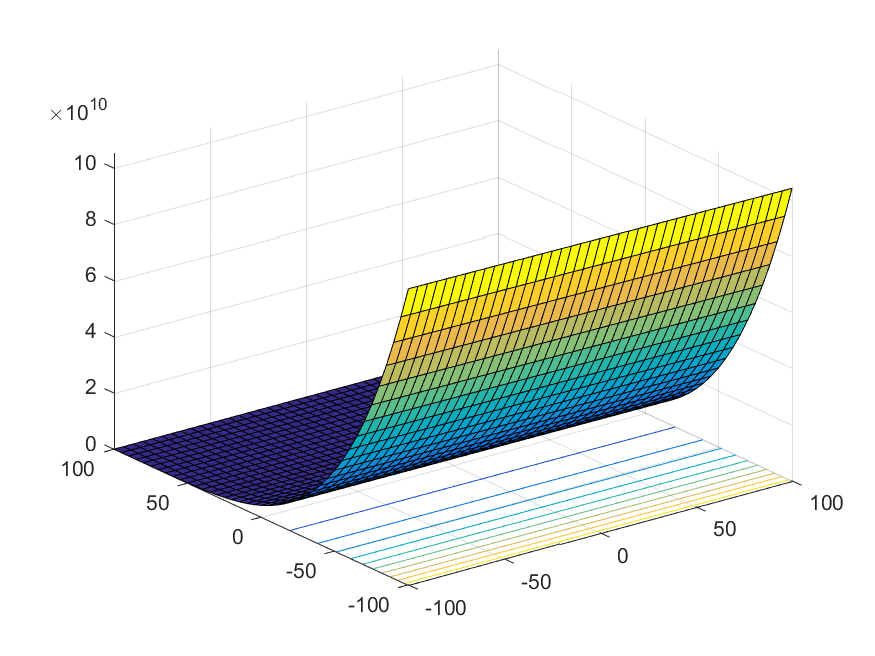} 
\caption{F6}
\end{subfigure}
\begin{subfigure}[b]{0.19\textwidth}
\centering
\includegraphics[width=1in]{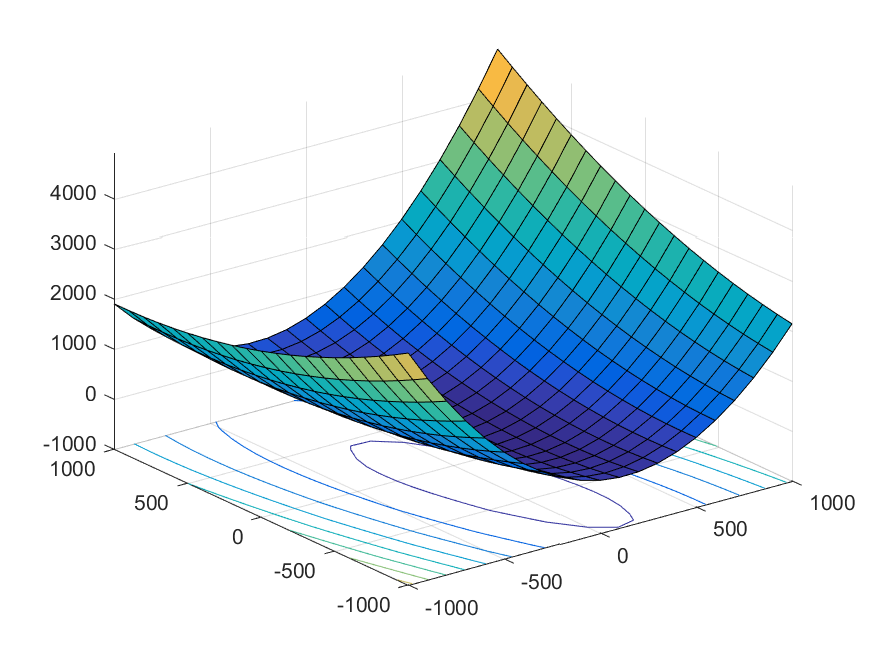}
\caption{F7}
\end{subfigure}
\begin{subfigure}[b]{0.19\textwidth}
\centering
\includegraphics[width=1in]{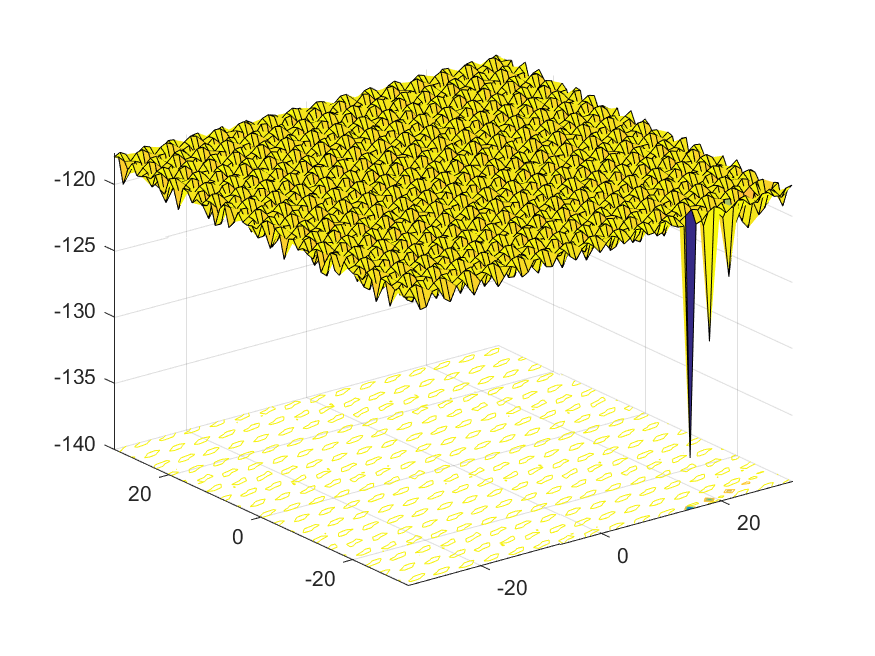}
\caption{F8}
\end{subfigure}
\begin{subfigure}[b]{0.19\textwidth}
\centering
\includegraphics[width=1in]{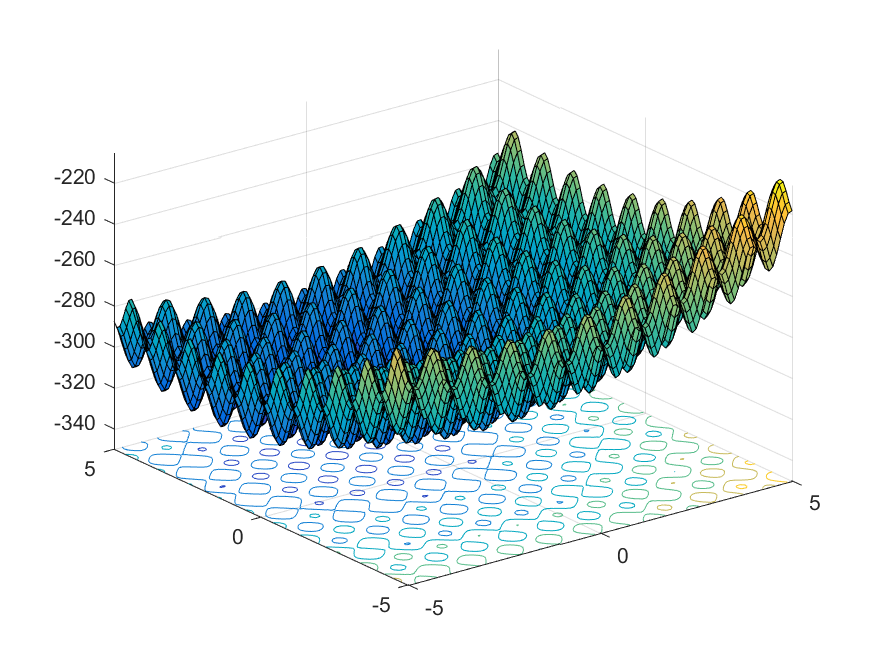}
\caption{F9}
\end{subfigure}
\begin{subfigure}[b]{0.19\textwidth}
\centering
\includegraphics[width=1in]{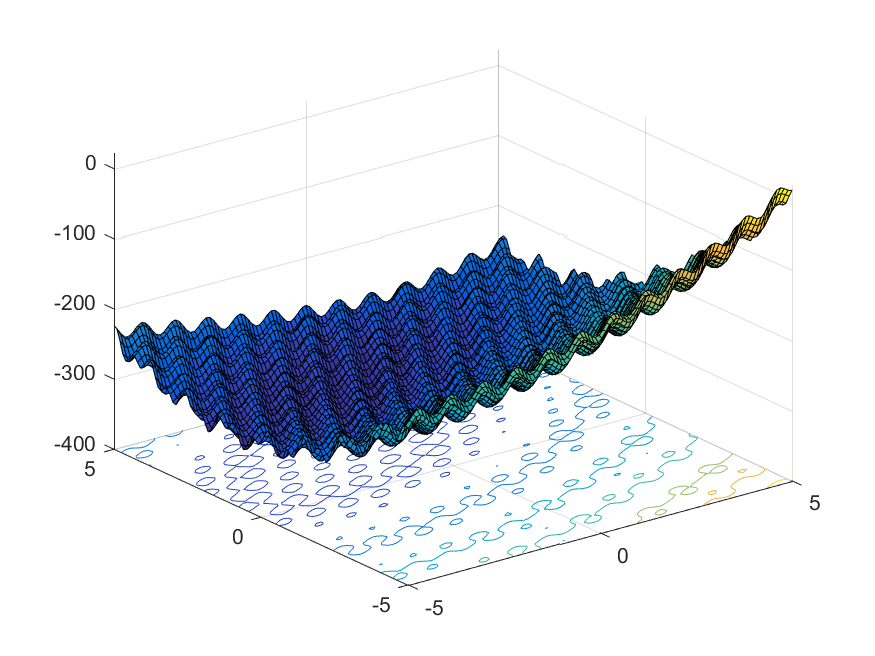}
\caption{F10}
\end{subfigure}
\begin{subfigure}[b]{0.19\textwidth}
\centering
\includegraphics[width=1in]{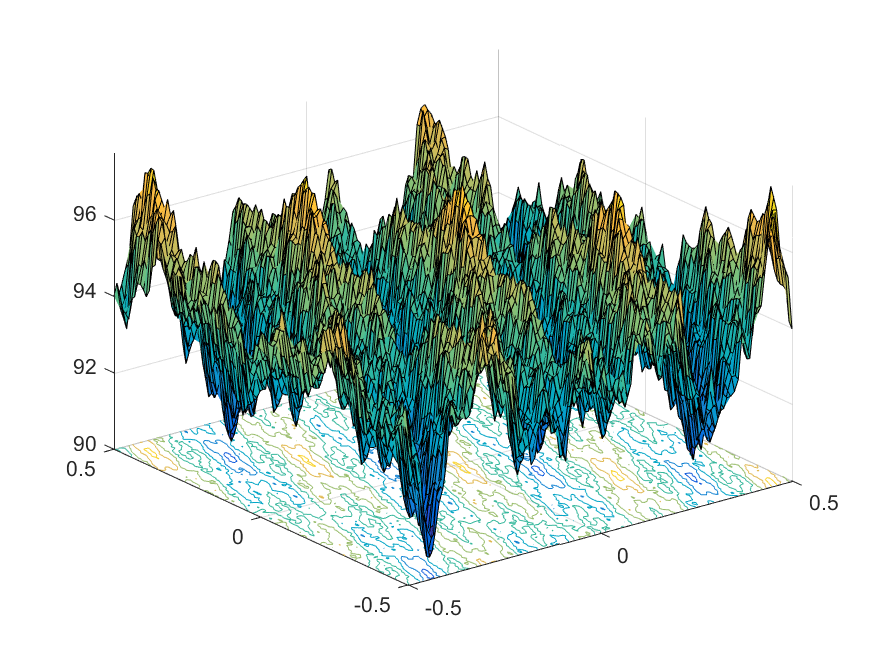}
\caption{F11}
\end{subfigure}
\begin{subfigure}[b]{0.19\textwidth}
\centering
\includegraphics[width=1in]{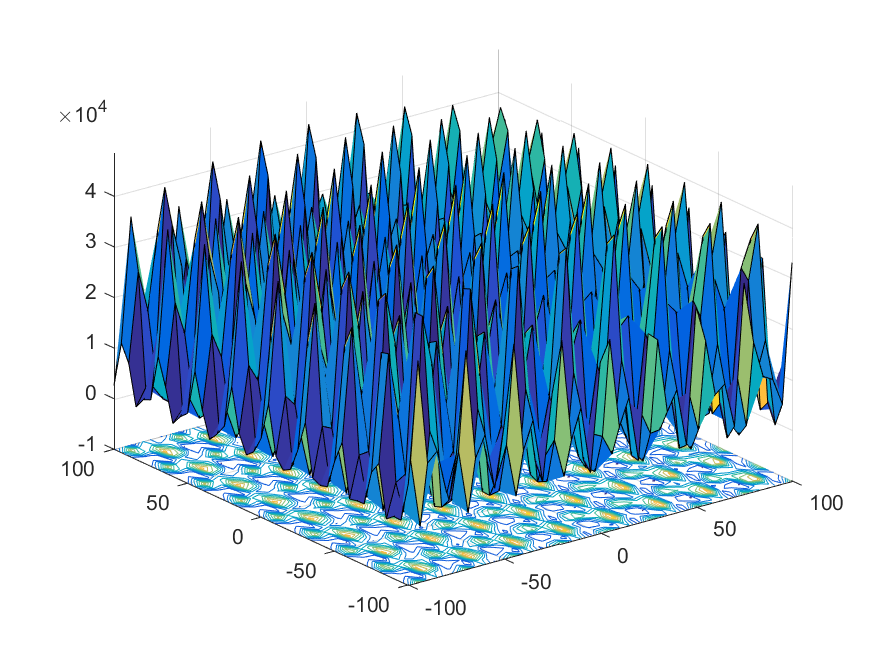}
\caption{F12}
\end{subfigure}
\begin{subfigure}[b]{0.19\textwidth}
\centering
\includegraphics[width=1in]{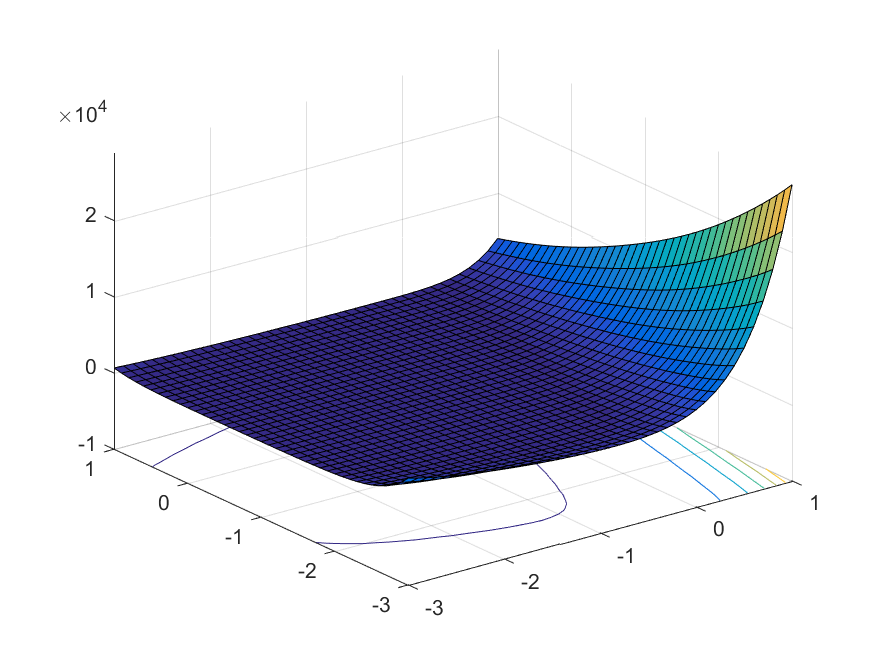}
\caption{F13}
\end{subfigure}
\begin{subfigure}[b]{0.19\textwidth}
\centering
\includegraphics[width=1in]{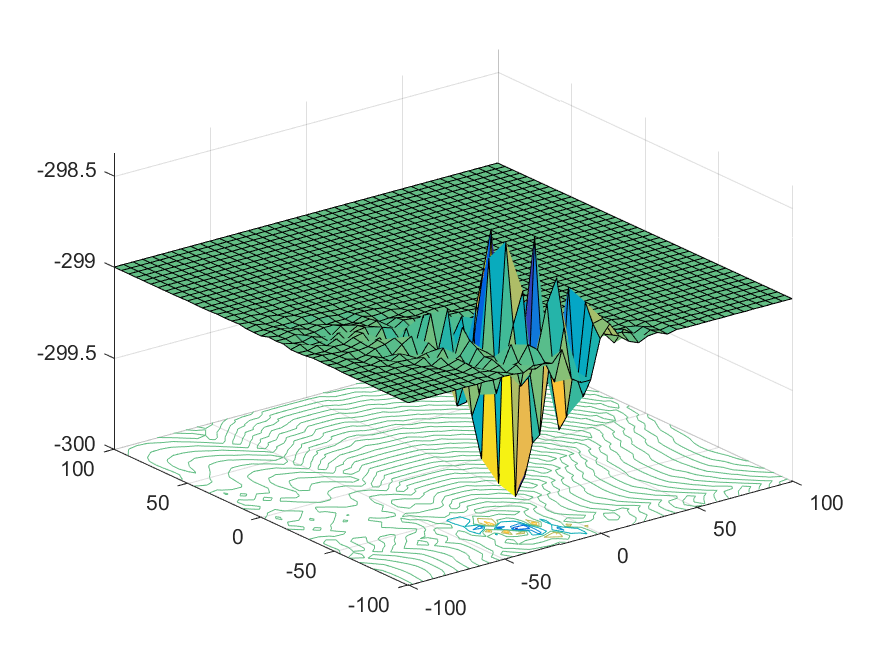}
\caption{F14}
\end{subfigure}
\caption{2D versions of CEC05 multi-modal benchmarks.}
\label{fig_multi_benchmarks}
\end{figure*} 

\begin{figure*}[!t]
\centering
\begin{subfigure}[b]{0.19\textwidth}
\centering
\includegraphics[width=1in]{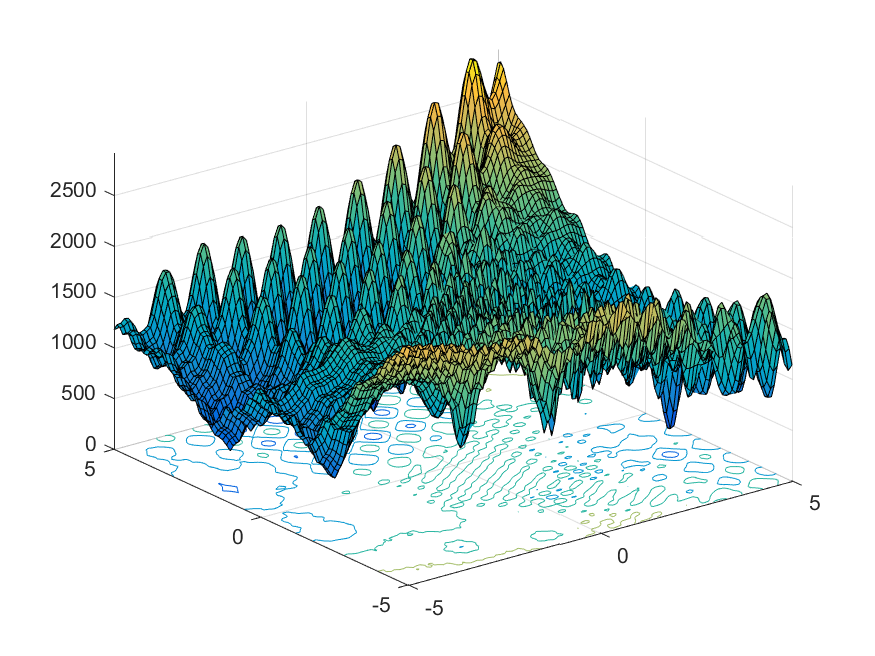} 
\caption{F15}
\end{subfigure}
\begin{subfigure}[b]{0.19\textwidth}
\centering
\includegraphics[width=1in]{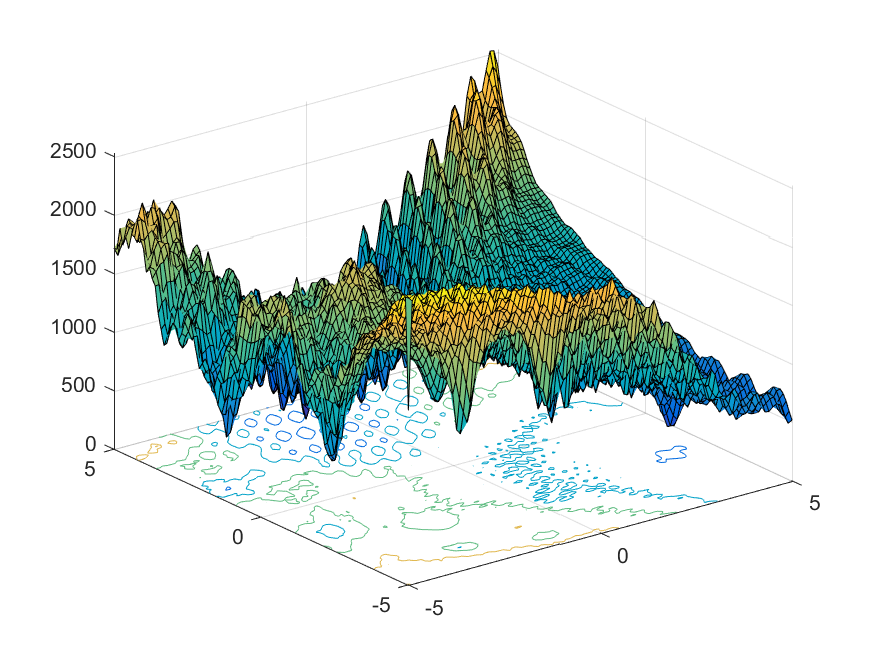}
\caption{F16}
\end{subfigure}
\begin{subfigure}[b]{0.19\textwidth}
\centering
\includegraphics[width=1in]{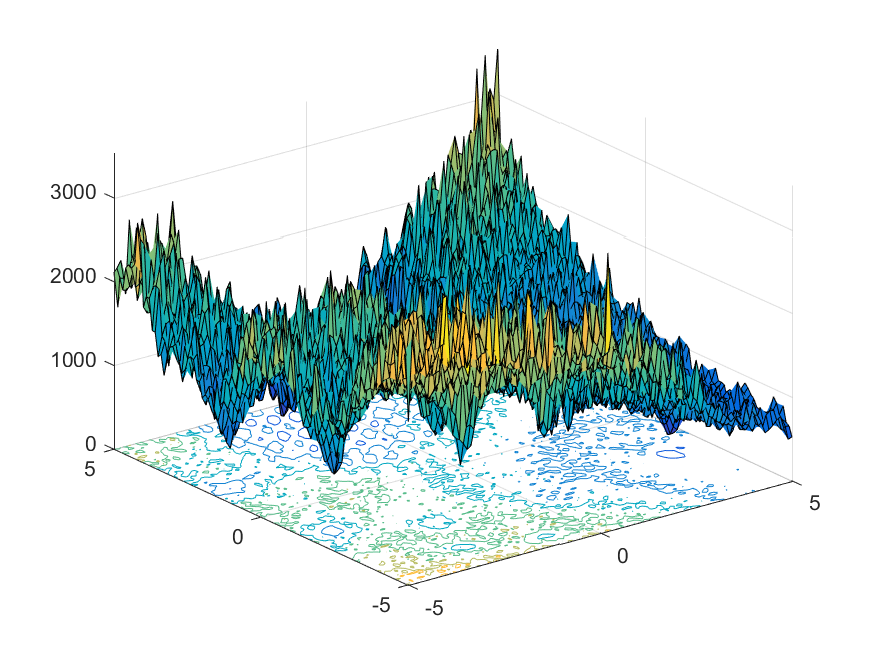}
\caption{F17}
\end{subfigure}
\begin{subfigure}[b]{0.19\textwidth}
\centering
\includegraphics[width=1in]{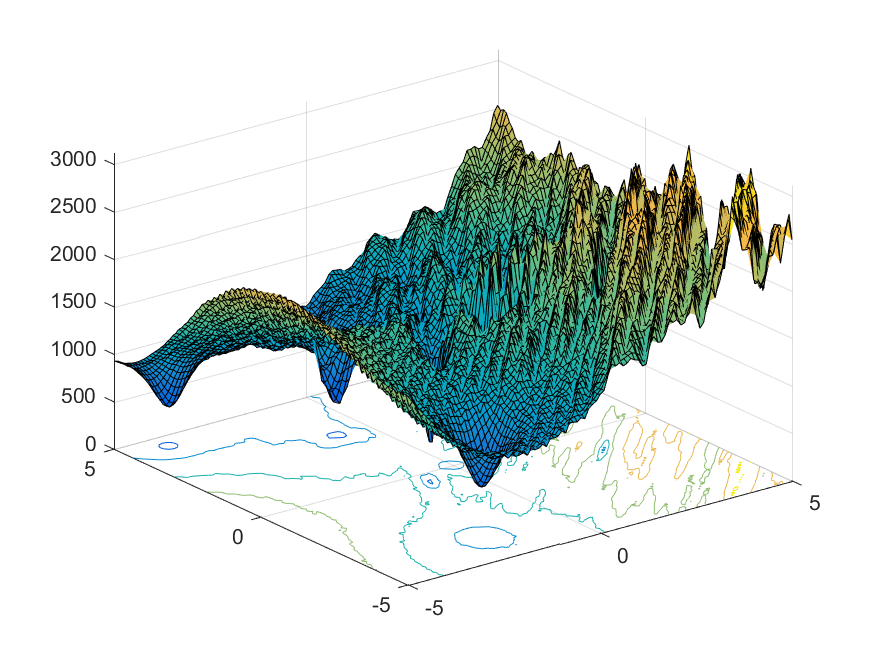}
\caption{F18}
\end{subfigure}
\begin{subfigure}[b]{0.19\textwidth}
\centering
\includegraphics[width=1in]{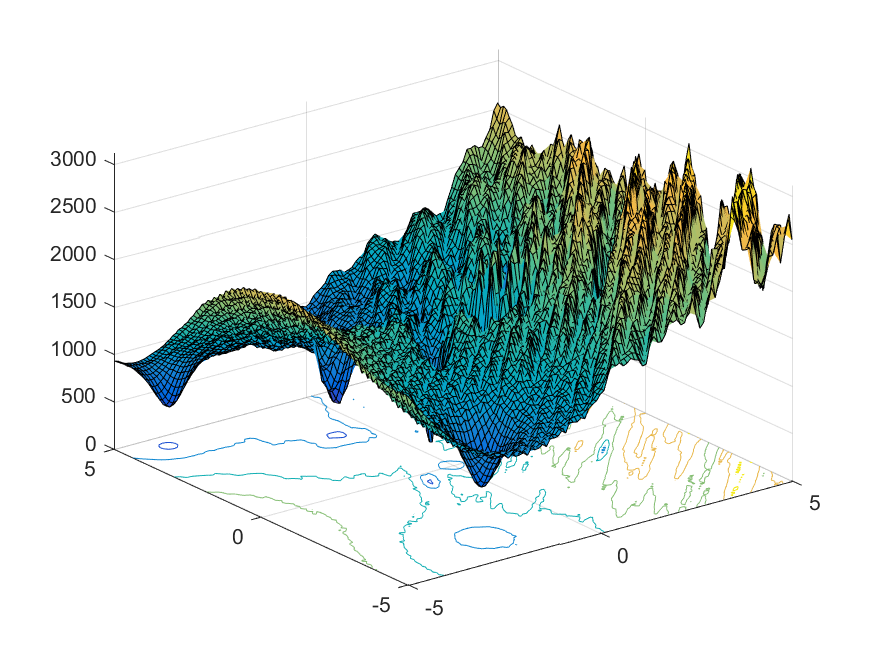}
\caption{F19}
\end{subfigure}
\begin{subfigure}[b]{0.19\textwidth}
\centering
\includegraphics[width=1in]{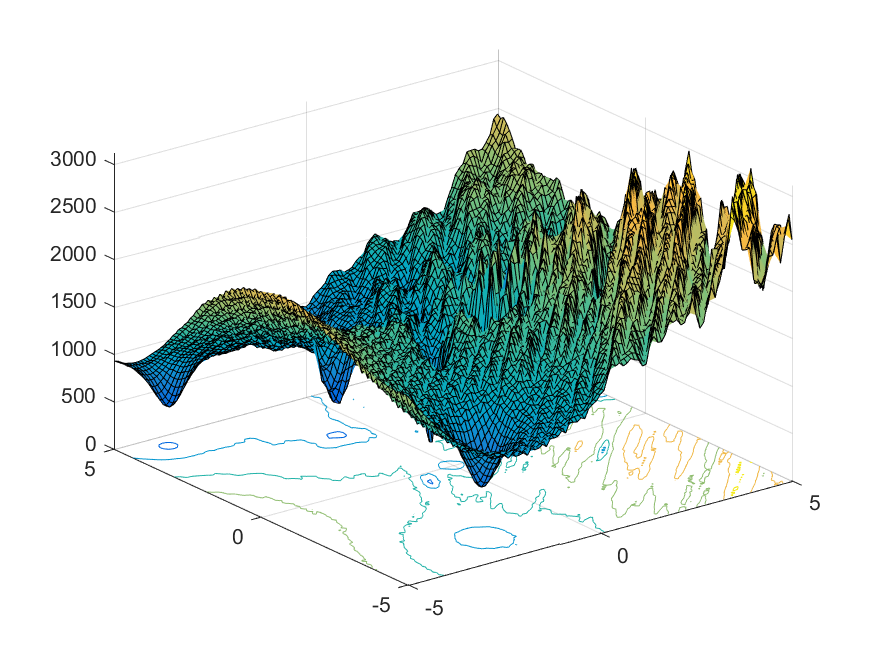}
\caption{F20}
\end{subfigure}
\begin{subfigure}[b]{0.19\textwidth}
\centering
\includegraphics[width=1in]{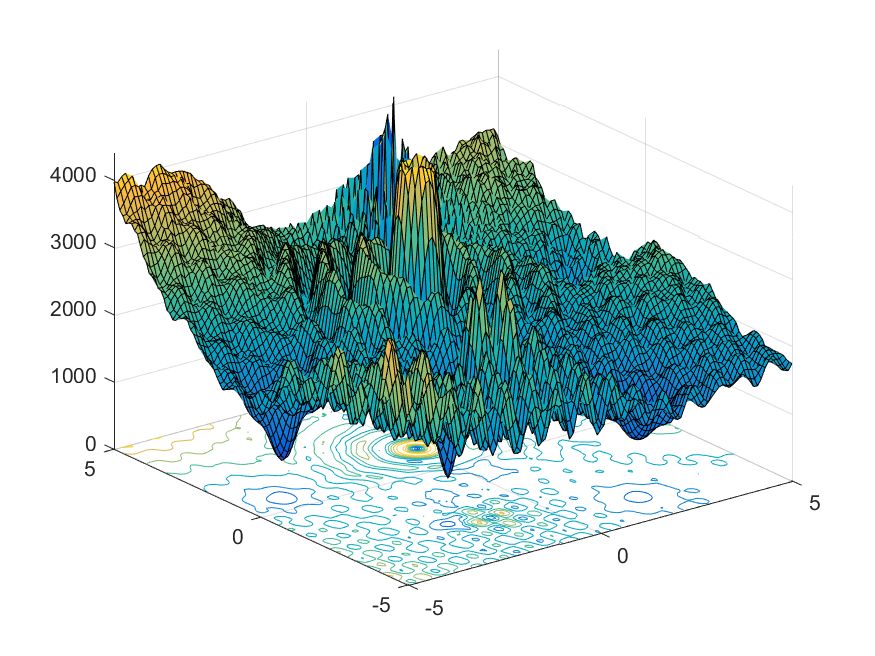}
\caption{F21}
\end{subfigure}
\begin{subfigure}[b]{0.19\textwidth}
\centering
\includegraphics[width=1in]{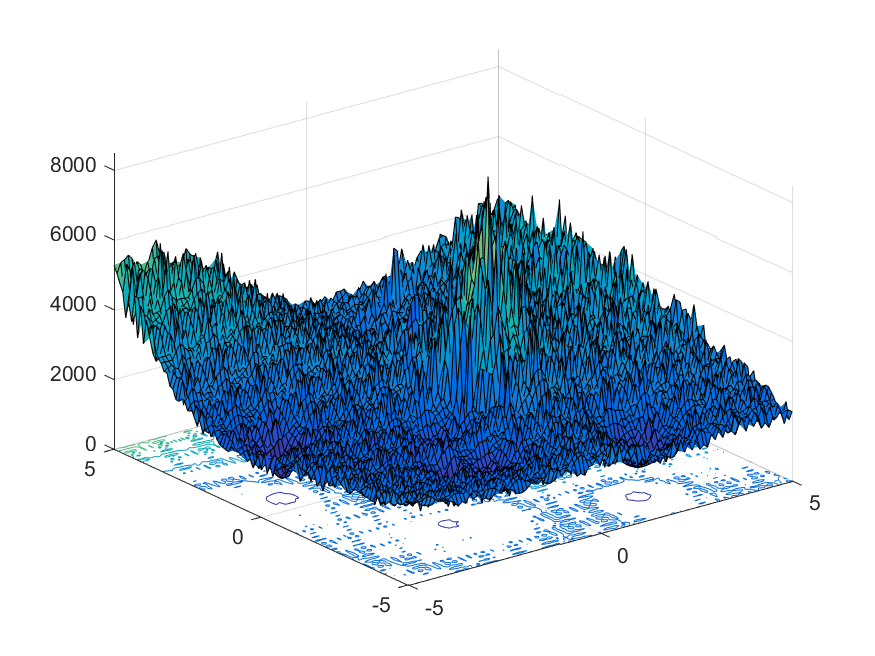}
\caption{F22}
\end{subfigure}
\begin{subfigure}[b]{0.19\textwidth}
\centering
\includegraphics[width=1in]{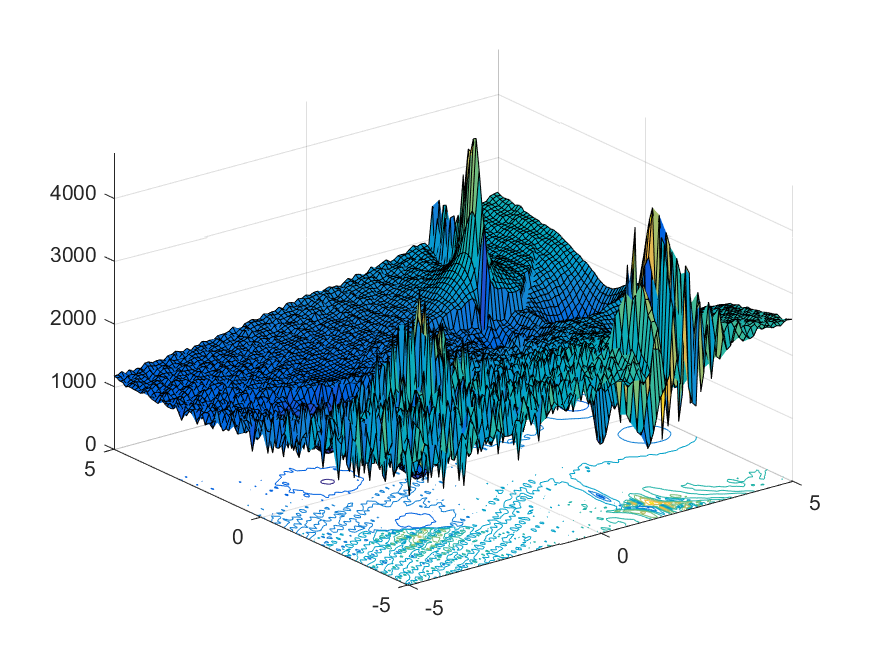}
\caption{F24}
\end{subfigure}
\begin{subfigure}[b]{0.19\textwidth}
\centering
\includegraphics[width=1in]{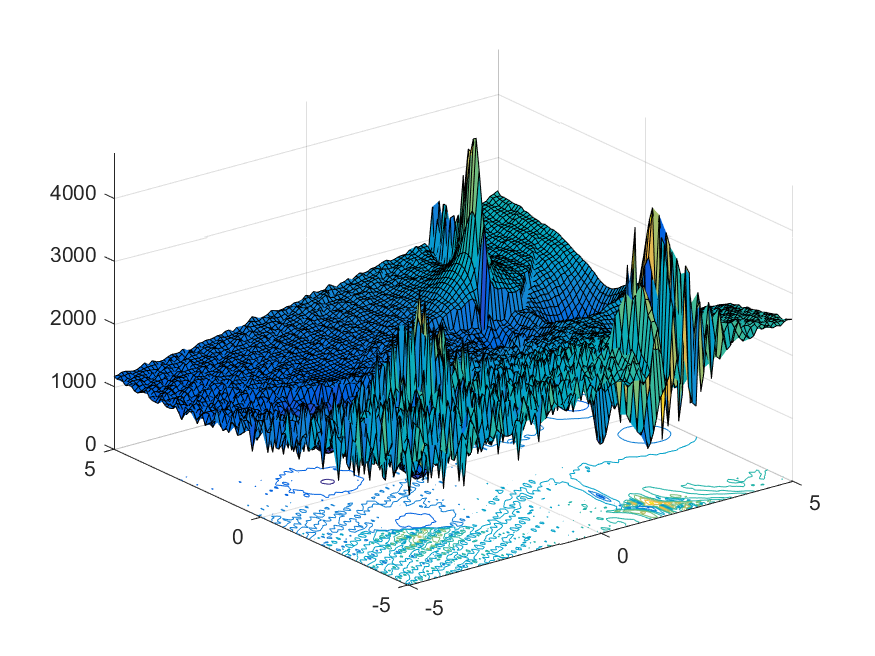}
\caption{F25}
\end{subfigure}
\caption{2D versions of CEC05 hybrid benchmarks.}
\label{fig_hybrid_benchmarks}
\end{figure*} 

\subsection{Time complexity}

\begin{theorem}
The time complexity of PMSO algorithm is $O(\text{NG} \times \text{IG} \times \text{IL})$ in worst case, where $\text{NG}$, $\text{IG}$, and $\text{IL}$ denote number of Gammarus individuals, number of global iterations, and number of local iterations, respectively.
\end{theorem}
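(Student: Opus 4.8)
The plan is to read off the complexity directly from the loop structure of Algorithm \ref{the_algorithm} and argue that the innermost work is constant, so that the running time is the product of the three nested loop bounds. First I would identify the three nested control structures that dominate: the outer \textbf{while} loop over global iterations (line \ref{algorithm1_globalIteration}), which in the worst case runs until the bound on the number of global iterations is hit and hence executes IG times; the \textbf{for} loop over the NG Gammarus individuals nested inside it; and the innermost \textbf{for} loop over the IL local-search iterations (line \ref{algorithm1_localIteration}). Multiplying these three bounds already produces the claimed factor $\text{NG} \times \text{IG} \times \text{IL}$, so the heart of the argument is to verify that everything executed inside the innermost loop, and everything executed between the loops, costs at most a constant or is dominated by the triple-nested term.

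Next I would account for the per-iteration body. Inside the innermost loop (lines \ref{algorithm1_status}--\ref{algorithm1_end_changeNeighborhoodStatus}) the operations are: setting the status label, performing one local search and pushing its result into the buffer, and the bounded conditional that increases or decreases the neighborhood. Each of these is $O(1)$ provided the local-search evaluation and the buffer operations on a queue of fixed length B are treated as constant-cost; I would state this assumption explicitly, noting that the landscape dimension $D$ is fixed for a given problem and is therefore absorbed into the constant. Hence one pass through the innermost body is $O(1)$, and the three nested loops together contribute $O(\text{NG} \times \text{IG} \times \text{IL})$.

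I would then show that the remaining work is dominated. The once-per-global-iteration, once-per-Gammarus steps --- computing the wave strength (line \ref{algorithm1_waveStrength}), the wave angles via equation (\ref{equation_angleTowardGlobalBest}) or (\ref{equation_angleRandom}), the wave vector via equations (\ref{equation_waveVectorComponents})--(\ref{equation_waveVector}), updating the location, and setting the initial neighborhood --- are each polynomial in the fixed dimension $D$ (the angle and component formulas cost $O(D^2)$) and hence constant once $D$ is held fixed; being executed $\text{NG} \times \text{IG}$ times they contribute $O(\text{NG} \times \text{IG})$, which is dominated because $\text{IL} \geq 1$. Likewise the GB update (lines \ref{algorithm1_start_updateGB}--\ref{algorithm1_end_updateGB}) and the IL decrement (line \ref{algorithm1_changeIL}) add only lower-order terms. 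For the initialization (lines \ref{algorithm1_start_initialization}--\ref{algorithm1_end_initialization}) I would bound the collision-check loop by treating the number of re-placement attempts per particle as a constant, so initialization costs $O(\text{NG})$, again dominated. Summing all contributions leaves $O(\text{NG} \times \text{IG} \times \text{IL})$ as the worst-case bound.

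The main obstacle I anticipate is not the arithmetic but pinning down the worst-case assumptions cleanly: the collision-check \textbf{while} loop (lines \ref{algorithm1_start_collision}--\ref{algorithm1_end_collision}) has no a priori termination guarantee, so a rigorous worst case requires assuming a constant bound on the number of retries, or an amortized argument, and the local-search cost and dimension $D$ must be fixed as constants for the stated bound to hold. Making those assumptions explicit --- rather than hiding them --- is the step most likely to draw scrutiny, since without them the honest worst case would carry extra factors of $D$ and of the collision-retry count.
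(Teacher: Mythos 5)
Your proposal is correct and follows essentially the same route as the paper: the bound is read off as the product of the three nested loop counts, with IG taken as an upper bound on global iterations (the stop criterion may end it sooner) and the initial IL taken as an upper bound on local iterations despite the decrementing. The additional bookkeeping you supply --- that the inner body is $O(1)$ for fixed $D$, that the per-Gammarus wave computations and the initialization (including a bounded collision-retry assumption) are dominated --- is more careful than the paper's one-paragraph argument but does not change the approach.
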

\begin{proof}
According to figure \ref{fig_flowchart} and Algorithm \ref{the_algorithm}, it can be seen that PMSO algorithm consists of $\text{IG}$ global iterations, which is the upper bound and the algorithm might terminate sooner by the stop criteria. In every global iteration, each Gammarus searches locally in its neighborhood. The number of local iterations $\text{IL}$ is decremented in every global iteration, however, as an upper bound we do not consider decrementing it. Hence, the time complexity for every particle is $O(\text{IG} \times \text{IL})$. This procedure is performed by every Gammarus individual, and as the population includes $\text{NG}$ particles, the time complexity of PMSO algorithm is $O(\text{NG} \times \text{IG} \times \text{IL})$.
\end{proof}

\subsection{Space complexity}

\begin{theorem}
The space complexity of PMSO algorithm is $O(\log(\text{NG} \times \text{B}))$ in worst case, where NG and B, respectively, denote the number of Gammarus individuals and the size of buffer. Therefore, the required space for PMSO algorithm is linear to the size of input.
\end{theorem}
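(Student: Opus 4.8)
The plan is to walk through Algorithm~\ref{the_algorithm} and the flow chart of Figure~\ref{fig_flowchart} and catalogue every quantity the procedure must keep resident in memory at the same time, exactly as was done for the running time in the previous theorem. The persistent objects are: the $\text{NG}$ Gammarus locations $\text{G}_i$ (each a $D$-vector), the global best GB together with the per-particle local bests $\text{LB}_i$, the scalar hyper-parameters ($\text{S}_\text{IL}, \text{L}_\text{IL}, \text{S}_\text{N}, \text{L}_\text{N}, \text{U}_\text{N}, \text{F}, \dots$) and the per-particle status labels $\text{St}_i$ and neighborhoods $\text{N}_i$, and finally the $\text{NG}$ queue buffers of length B. Everything except the buffers contributes only $O(1)$ per particle, so the first step is to establish that the buffers dominate, giving at most $\text{NG}\times\text{B}$ stored cells in the worst case, i.e.\ when all buffers are simultaneously full.

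Next I would record the two features of the algorithm that keep this footprint from growing with the iteration counts. The buffer is a fixed-length queue that is emptied at the start of every global iteration (line~\ref{algorithm1_bufferLocalSearch} and the surrounding discussion), so it never exceeds length B no matter how many local iterations IL are executed; and the wave and neighborhood bookkeeping overwrites $\text{G}_i$, $\text{N}_i$, and $\text{W}_i$ in place rather than accumulating any history across iterations. Consequently the resident data is bounded independently of IG and IL, which justifies treating $\text{NG}\times\text{B}$ as a genuine worst-case ceiling rather than a per-iteration quantity.

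The decisive, and frankly the most delicate, step is converting the $\text{NG}\times\text{B}$ cell count into the claimed $O(\log(\text{NG}\times\text{B}))$. Here the accounting convention is everything: the bound is obtained by charging only the address width needed to reference the population and buffers, so that $\text{NG}\times\text{B}$ addressable locations require $O(\log(\text{NG}\times\text{B})) = O(\log\text{NG} + \log\text{B})$ bits of indexing, while the raw storage of the population and buffers is regarded as the given input and is not charged against the algorithm's working space. I expect this to be the main obstacle to make rigorous, since one must state explicitly that the remaining auxiliary bookkeeping consists only of the loop counters over $i = 1,\dots,\text{NG}$ and the queue pointer over $1,\dots,\text{B}$, and that these are what is being measured. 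Once that convention is fixed the logarithmic bound is immediate, and the closing claim follows by observing that $\log(\text{NG}\times\text{B})$ is sublinear and hence in particular bounded linearly in the size of the input description, which yields the stated linearity.
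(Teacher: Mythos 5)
You have correctly catalogued the resident data structures, but your decisive step diverges from the paper's and, as you yourself suspect, is exactly where the argument gives way. You posit that all NG buffers of length B are simultaneously full, arrive at $\text{NG}\times\text{B}$ stored cells, and then recover the logarithmic bound by declaring that only the ``address width'' (loop counters and the queue pointer) counts as working space, the population and buffers being ``given input.'' That convention is untenable here: the buffers are allocated and written by the algorithm itself (line \ref{algorithm1_bufferLocalSearch}), not supplied as input, so under any standard accounting your inventory yields $O(\text{NG}\times\text{B})$ cells, not $O(\log(\text{NG}\times\text{B}))$. Redefining which cells are chargeable is not an argument that the footprint is logarithmic; it assumes the conclusion.

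The paper's proof reaches the bound by an observation that your proposal explicitly contradicts: because the algorithm iterates \emph{serially} over the particles within each global iteration, the per-particle auxiliary structures ($\text{N}_i$, the buffer, $\text{LB}_i$, $\text{St}_i$) never need to be live for all particles at once --- they are overwritten from one particle to the next. Hence only a single buffer of length B is ever resident, contributing $O(\log \text{B})$ under the paper's accounting, the other per-particle quantities contribute constants, and the $O(\log \text{NG})$ term comes from storing the particle locations; summing gives $O(\log \text{NG} + \log \text{B}) = O(\log(\text{NG}\times\text{B}))$. (The paper's own bookkeeping --- charging $O(\log \text{NG})$ for NG locations --- is itself idiosyncratic, but the overwrite-by-serialization idea is the load-bearing step, and it is the one missing from your proposal; your second paragraph even notes the in-place overwriting of $\text{G}_i$, $\text{N}_i$, $\text{W}_i$ across iterations, but does not apply the same reasoning across particles, which is where the NG factor is actually eliminated.)
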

\begin{proof}
According to figure \ref{fig_flowchart} and Algorithm \ref{the_algorithm}, it can be seen that PMSO algorithm needs $O(\log(\text{NG}))$ spaces for saving the locations of particles. 
Moreover, for saving $\text{N}_i$, buffer, $\text{LB}_i$, and $\text{St}_i$, it requires $O(\log(\text{NG}))$, $O(\log(\text{NG} \times \text{B}))$, $O(\log(\text{NG}))$, and $O(\log(\text{NG} \times 3))$ spaces, respectively, for NG particles. Note that $3$ in time complexity of $\text{St}_i$ is because the particle can have three possible cases, i.e., NoChange, Increase, Decrease. Notice that the as the algorithm is iterating serially on all the particles in every global iteration, it does not need to save these spaces simultaneously for all particles. In other words, these spaces can be overwritten by particles in the iterations. Therefore, for $\text{N}_i$, buffer, $\text{LB}_i$, and $\text{St}_i$, merely $O(c_1)$, $O(\log(\text{B}))$, $O(c_2)$, and $O(c_3)$ spaces are required, respectively, where $c_1$, $c_2$, and $c_3$ are constants. Other parameters, such as GB, need only constant space $O(c_4)$ to be stored. Thus, the total space needed for PMSO algorithm is $O(\log(\text{NG}) + c_1 + \log(\text{B}) + c_2 + c_3 + c_4) = O(\log(\text{NG} \times \text{B}))$.
\end{proof}

\begin{table*}[!t]
\renewcommand{\arraystretch}{2}  
\caption{CEC05 uni-modal benchmarks used for experiments}
\label{table_uni_benchmarks}
\centering
\scalebox{0.9}{    
\begin{tabular}{l c c c c}
\hline
\hline
\textbf{Benchmark Function} & \textbf{Bounds} & \textbf{Initialization} & $f_{min}$\\
\hline
$\text{F1}(x) = \sum_{i=1}^D z_i^2 - 450$ & $[-100, 100]$ & $[-100, 100]$ & $-450$\\
\hline
$\text{F2}(x) = \sum_{i=1}^D (\sum_{j=1}^i z_j)^2 - 450$ & $[-100, 100]$ & $[-100, 100]$ & $-450$\\
\hline
$\text{F3}(x) = \sum_{i=1}^D (10^6)^{\frac{i-1}{D-1}} z_i^2 - 450$ & $[-100, 100]$ & $[-100, 100]$ & $-450$\\
\hline
$\text{F4}(x) = (\sum_{i=1}^D (\sum_{j=1}^i z_j)^2) \times (1 + 0.4|N(0,1)|) - 450$ & $[-100, 100]$ & $[-100, 100]$ & $-450$\\
\hline
$\text{F5}(x) = \text{max}(|A_i x - B_i|) - 310$ & $[-100, 100]$ & $[-100, 100]$ & $-310$\\
\hline
\hline
\end{tabular}
}
\end{table*}

\begin{table*}[!t]
\renewcommand{\arraystretch}{2}  
\caption{CEC05 multi-modal benchmarks used for experiments}
\label{table_multi_benchmarks}
\centering
\scalebox{0.9}{    
\begin{tabular}{l c c c}
\hline
\hline
\textbf{Benchmark Function} & \textbf{Bounds} & \textbf{Initialization} & $f_{min}$\\
\hline
$\text{F6}(x) = \sum_{i=1}^{D-1} (100(z_i^2 - z_{i+1})^2 + (z_i - 1)^2) + 390$ & $[-100, 100]$ & $[-100, 100]$ & $+390$\\
\hline
$\text{F7}(x) = \sum_{i=1}^D \frac{z_i^2}{4000} - \prod_{i=1}^D \text{cos}(\frac{z_i}{\sqrt{i}}) + 1 - 180 $ & No bounds & $[0, 600]$ & $-180$\\
\hline
$\text{F8}(x) = -20 \text{exp}(-0.2\sqrt{\frac{1}{D} \sum_{i=1}^D z_i^2}) - \text{exp}(\frac{1}{D} \sum_{i=1}^D \text{cos}(2\pi z_i)) + 20 + e - 140 $ & $[-32, 32]$ & $[-32, 32]$ & $-140$\\
\hline
$\text{F9}(x) = \sum_{i=1}^D (z_i^2 - 10 \text{cos}(2\pi z_i) + 10) - 330 $ & $[-5, 5]$ & $[-5, 5]$ & $-330$\\
\hline
$\text{F10}(x) = \sum_{i=1}^D (z_i^2 - 10 \text{cos}(2\pi z_i) + 10) - 330 $ & $[-5, 5]$ & $[-5, 5]$ & $-330$\\
\hline
$\text{F11}(x) = \sum_{i=1}^D (\sum_{k=0}^{20}[0.5^k \text{cos}(2\pi \times 3^k (z_i + 0.5))]) - D \sum_{k=0}^{20} [0.5^k \text{cos}(2\pi \times 3^k \times 0.5)] + 90 $ & $[-0.5, 0.5]$ & $[-0.5, 0.5]$ & $+90$\\
\hline
$\text{F12}(x) = \sum_{i=1}^D (A_i - B_i(x))^2 - 460 $ & $[-\pi, \pi]$ & $[-\pi, \pi]$ & $-460$\\
\hline
$\text{F13}(x) = F_a(F_b(z_1,z_2)) + F_a(F_b(z_2,z_3)) + \dots + F_a(F_b(z_{D-1},z_D)) + F_a(F_b(z_D,z_1)) - 130$ & $[-3, 1]$ & $[-3, 1]$ & $-130$\\
\hline
$\text{F14}(x) = F_c(z_1, z_2) + F_c(z_2, z_3) + \dots + F_c(z_{D-1}, z_D) + F_c(z_D, z_1) - 300$ & $[-100, 100]$ & $[-100, 100]$ & $-300$\\
\hline
\hline
\end{tabular}
}
\end{table*}

\begin{table*}[!t]
\renewcommand{\arraystretch}{2}  
\caption{CEC05 hybrid benchmarks used for experiments}
\label{table_hybrid_benchmarks}
\centering
\scalebox{0.9}{    
\begin{tabular}{l c c c}
\hline
\hline
\textbf{Benchmark Function} & \textbf{Bounds} & \textbf{Initialization} & $f_{min}$\\
\hline
$\text{F15}(x) = $ basic functions: Rastrigin, Weierstrass, Griewank, Ackley, Sphere & $[-5, 5]$ & $[-5, 5]$ & $+120$\\
\hline
$\text{F16}(x) = $ $\text{F15}(x)$ with different linear transformation matrices & $[-5, 5]$ & $[-5, 5]$ & $+120$\\
\hline
$\text{F17}(x) = (\text{F16}(x) - 120) \times (1 + 0.2|N(0,1)|) + 120$ & $[-5, 5]$ & $[-5, 5]$ & $+120$\\
\hline
$\text{F18}(x) = $ basic functions: Ackley, Rastrigin, Sphere, Weierstrass, Griewank & $[-5, 5]$ & $[-5, 5]$ & $+10$\\
\hline
$\text{F19}(x) = $ $\text{F18}(x)$ with difference in weights and $\lambda_i$ & $[-5, 5]$ & $[-5, 5]$ & $+10$\\
\hline
$\text{F20}(x) = $ $\text{F18}(x)$ with difference in shifting optima & $[-5, 5]$ & $[-5, 5]$ & $+10$\\
\hline
$\text{F21}(x) = $ basic functions: Rotated Expanded Scaffer, Rastrigin, F13, Weierstrass, Griewank & $[-5, 5]$ & $[-5, 5]$ & $+360$\\
\hline
$\text{F22}(x) = $ $\text{F21}(x)$ with different linear transformation matrices & $[-5, 5]$ & $[-5, 5]$ & $+360$\\
\hline
$\text{F24}(x) = $ basic functions: Weierstrass, Scaffer, F13, Ackley, Rastrigin, Griewank, $\dots$ & $[-5, 5]$ & $[-5, 5]$ & $+260$\\
\hline
$\text{F25}(x) = $ $\text{F24}(x)$ with no bounds & No bounds & $[2, 5]$ & $+260$\\
\hline
\hline
\end{tabular}
}
\end{table*}

\section{Experimental results of algorithm}\label{Section_Experiments}

\subsection{Benchmarks}

For evaluations, verification, and comparison of the proposed optimization algorithm with other state-of-the-art methods, CEC05 benchmarks \cite{suganthan2005problem}, which are standard benchmarks for metaheuristic optimization, are used\footnote{The CEC 2005 benchmark functions are available online: http://www.ntu.edu.sg/home/EPNSugan/}. This bank of benchmark includes different types of benchmark functions, i.e., uni-modal, multi-modal, and hybrid. These benchmarks are conducted on several real-world optimization problems \cite{suganthan2005problem}. 

Uni-modal and multi-modal benchmarks include one and multiple global optimums, receptively. Hybrid functions, however, are composed of several different uni-modal and multi-modal functions with different wights and coverage on domain. Tables \ref{table_uni_benchmarks}, \ref{table_multi_benchmarks}, and \ref{table_hybrid_benchmarks} respectively report the mathematical expressions and properties of uni-modal, multi-modal, and hybrid benchmark functions in CEC05 benchmark bank \cite{suganthan2005problem}. The 2D versions of uni-modal, multi-modal, and hybrid benchmarks are shown in figures \ref{fig_uni_benchmarks}, \ref{fig_multi_benchmarks}, and \ref{fig_hybrid_benchmarks}, respectively. Note that F23 is not considered in hybrid benchmarks because it is quantized and not continuous.

In tables \ref{table_uni_benchmarks}, \ref{table_multi_benchmarks}, and \ref{table_hybrid_benchmarks}, $z_i$ is the $i^{th}$ component of vector $z$. This vector is $z=(x-o)$ for functions F1, F2, F4, F9, and is $z=(x-o)+1$ for functions F6, F13, and is $z=(x-o) \times M$ for functions F3, F7, F8, F10, F11, F14, where $o$ is the shifted global optimum and $M$ denotes an orthogonal transformation matrix. In F5, $A_i$ and $B_i$ are $D \times D$ matrix and $D \times 1$ vector, respectively, where $D$ denotes dimension of benchmark. However, in F12, both $A_i$ and $B_i$ are $D \times D$ matrices. In F13, $F_a(x) = \sum_{i=1}^D \frac{x_i^2}{4000} - \prod_{i=1}^D \text{cos}(\frac{x_i}{\sqrt{i}}) + 1$ and $F_b(x) = \sum_{i=1}^{D-1} (100(x_i^2 - x_{i+1})^2 + (x_i - 1)^2)$. In F14, $F_c(x,y) = 0.5 + \frac{\text{sin}^2(\sqrt{x^2 + y^2}) - 0.5}{(1 + 0.001(x^2 + y^2))^2}$.
On the other hand, for hybrid functions listed in table \ref{table_hybrid_benchmarks}, benchmark functions are composed as weighted summation of basic functions $f_i(x)$ according to this formula: $F(x) = \sum_{i=1}^{n} (w_i \times [f'_i((x - o_i) / \lambda_i \times M_i) + \text{bias}_i]) + f_\text{bias}$, where $w_i$ and $\lambda_i$, respectively, denote the weights and stretch/compress amount of basic function in composition. The reader is refered to \cite{suganthan2005problem} for more details on benchmark functions and their parameters.

\begin{table}[!t]
\renewcommand{\arraystretch}{1.3}  
\caption{Mean, best and standard deviation values of solutions achieved for CEC05 uni-modal benchmarks taken over 30 runs at 2 dimensions.}
\label{table_uni_results_2D}
\centering
\scalebox{0.9}{    
\begin{tabular}{l | c c c c c}
\hline
\hline
 & \textbf{F1} & \textbf{F2} & \textbf{F3} & \textbf{F4} & \textbf{F5}\\
\hline  
Best & 0.000E$+$00 & 0.000E$+$00 & 5.066E$-$07 & 0.000E$+$00 & 1.453E$-$01 \\
Mean & 7.504E$-$06 & 8.143E$-$06 & 3.143E$+$02 & 9.646E$-$06 & 5.242E$-$01 \\
Std. & 1.781E$-$05 & 2.060E$-$05 & 6.663E$+$02 & 2.767E$-$05 & 2.709E$-$01 \\
\hline
\hline
\end{tabular}
}
\end{table}

\begin{table*}[!t]
\renewcommand{\arraystretch}{1.3}  
\caption{Mean, best and standard deviation values of solutions achieved for CEC05 multi-modal benchmarks taken over 30 runs at 2 dimensions.}
\label{table_multi_results_2D}
\centering
\scalebox{0.9}{    
\begin{tabular}{l | c c c c c c c c c}
\hline
\hline
 & \textbf{F6} & \textbf{F7} & \textbf{F8} & \textbf{F9} & \textbf{F10} & \textbf{F11} & \textbf{F12} & \textbf{F13} & \textbf{F14}\\
\hline  
Best & 0.000E$+$00 & 7.400E$-$03 & 1.120E$-$02 & 0.000E$+$00 & 0.000E$+$00 & 0.000E$+$00 & 0.000E$+$00 & 0.000E$+$00 & 0.000E$+$00 \\
Mean & 7.161E$+$00 & 1.586E$-$01 & 4.367E$+$00 & 6.630E$-$02 & 8.130E$-$02 & 3.290E$-$02 & 2.440E$-$02 & 1.120E$-$03 & 1.240E$-$02 \\
Std. & 1.870E$+$01 & 2.745E$-$01 & 3.776E$+$00 & 2.524E$-$01 & 2.610E$-$01 & 5.000E$-$02 & 4.660E$-$02 & 3.100E$-$03 & 9.500E$-$03 \\
\hline
\hline
\end{tabular}
}
\end{table*}

\subsection{Experiments on 2D benchmarks}

PMSO algorithm is tested on 2-dimensional CEC05 benchmarks in order to experimentally verify the effectiveness and correctness of this new optimization algorithm. 
The population size is set to be 40 as in \cite{el2012performance}, and the number of runs for each benchmark function is 30 as in \cite{uymaz2015artificial}. According to \cite{suganthan2005problem}, stop criterion is maximum number of function evaluations which is $10,000 \times D$ where $D$ is the dimension of benchmark.

The results of these experiments are reported in tables \ref{table_uni_results_2D} and \ref{table_multi_results_2D} for uni-modal and multi-modal benchmarks, respectively. 
As can be seen in these tables, PMSO algorithm works properly for the goal of optimization and its performance is acceptable. The errors on all different benchmarks, except perhaps F3, are significantly small which is expected. These results experimentally verify the proof of correctness of PMSO algorithm, and show that it can be applied on different optimization problems and benchmarks.

\begin{table*}[!t]
\renewcommand{\arraystretch}{1.3}  
\caption{Mean, best and standard deviation values of solutions achieved for CEC05 uni-modal benchmarks taken over 30 runs at 10 dimensions.}
\label{table_uni_results_10D}
\centering
\scalebox{0.9}{    
\begin{tabular}{l | l | c : c : c : c : c : c : c}
\hline
\hline
& & \textbf{PMSO} & \textbf{BA} & \textbf{ABC} & $\textbf{HS}_\textbf{POP}$ & $\textbf{ACO}_\textbf{R}$ & \textbf{AAA} & \textbf{DE}\\
\hline
\multirow{3}{*}{F1}  
& Best & 5.900E$-$03 & 2.997E$+$00 & 0.000E$+$00 & 0.000E$+$00 & 0.000E$+$00 & 0.000E$+$00 & 0.000E$+$00\\
& Mean & 6.010E$-$02 & 4.233E$+$03 & 1.895E$-$15 & 0.000E$+$00 & 2.274E$-$14 & 0.000E$+$00 & 0.000E$+$00\\
& Std. & 3.450E$-$02 & 3.812E$+$03 & 1.038E$-$14 & 0.000E$+$00 & 3.533E$-$14 & 0.000E$+$00 & 0.000E$+$00\\
\hline
\multirow{3}{*}{F2}  
& Best & 9.400E$-$03 & 4.257E$+$00 & 3.077E$-$01 & 4.607E$-$03 & 0.000E$+$00 & 1.541E$-$08 & 5.013E$+$00\\
& Mean & 1.024E$-$01 & 4.898E$+$03 & 3.322E$+$00 & 2.774E$+$01 & 1.478E$-$13 & 1.184E$-$06 & 1.403E$+$01\\
& Std. & 5.800E$-$02 & 3.547E$+$03 & 2.893E$+$00 & 3.481E$+$01 & 1.263E$-$13 & 1.757E$-$06 & 5.389E$+$00\\
\hline
\multirow{3}{*}{F3}  
& Best & 1.651E$+$04 & 3.169E$+$05 & 1.860E$+$05 & 3.230E$+$04 & 8.126E$+$04 & 5.199E$+$04 & 3.482E$+$05\\
& Mean & 2.866E$+$05 & 1.528E$+$07 & 7.224E$+$05 & 2.196E$+$05 & 2.238E$+$06 & 3.111E$+$05 & 1.753E$+$06\\
& Std. & 2.208E$+$05 & 1.959E$+$07 & 3.527E$+$05 & 1.723E$+$05 & 2.689E$+$06 & 2.441E$+$05 & 6.862E$+$05\\
\hline
\multirow{3}{*}{F4}  
& Best & 1.072E$-$01 & 1.597E$+$03 & 2.894E$+$02 & 8.339E$-$02 & 5.684E$-$14 & 4.191E$-$05 & 5.814E$+$01\\
& Mean & 1.981E$+$03 & 8.885E$+$03 & 1.327E$+$03 & 5.989E$+$01 & 3.752E$-$13 & 8.199E$-$03 & 1.483E$+$02\\
& Std. & 1.610E$+$03 & 4.665E$+$03 & 6.884E$+$02 & 1.074E$+$02 & 3.221E$-$13 & 9.187E$-$03 & 6.227E$+$01\\
\hline
\multirow{3}{*}{F5}  
& Best & 6.038E$+$01 & 8.165E$+$02 & 8.169E$+$00 & 1.855E$-$10 & 3.456E$+$01 & 0.000E$+$00 & 1.206E$-$01\\
& Mean & 1.929E$+$03 & 8.168E$+$03 & 7.750E$+$01 & 5.555E$+$01 & 3.938E$+$02 & 1.601E$-$11 & 3.384E$+$00\\
& Std. & 1.486E$+$03 & 3.946E$+$03 & 8.828E$+$01 & 1.058E$+$02 & 4.884E$+$02 & 3.433E$-$11 & 3.175E$+$00\\
\hline
\hline
\end{tabular}
}
\end{table*}

\subsection{Comparison on 10D benchmarks}
	
For the sake of comparing the proposed optimization algorithm, 10-dimensional uni-modal, multi-modal, and hybrid benchmarks are utilized. Several well-known state-of-the-art methods, which most of them are inspired by foraging behavior of living organisms as in PMSO algorithm, are used for comparison. These algorithms are Bees Algorithm (BA) \cite{pham2011bees,el2012performance}, Artificial Bee Colony (ABC) \cite{karaboga2005idea,akay2009parameter,el2012performance}, Population-based Harmony Search ($\text{HS}_\text{POP}$) \cite{mukhopadhyay2008population}, Ant Colony Optimization ($\text{ACO}_\text{R}$) \cite{socha2008ant,el2012performance}, Artificial Algae Algorithm (AAA) \cite{uymaz2015artificial}, and Differential Evolution (DE) \cite{storn1997differential,el2012performance}. 

The results of experimenting PMSO algorithm as well as state-of-the-art methods on uni-modal, multi-modal, and hybrid benchmarks are reported in tables \ref{table_uni_results_10D}, \ref{table_multi_results_10D}, and \ref{table_hybrid_results_10D}, respectively. The results of other state-of-the-art methods, which are reported in these tables, are taken from \cite{uymaz2015artificial}.
The settings of experiments are the same as mentioned for 2D experiments.
As is obvious in these three tables, PMSO algorithm does have well performance on the different benchmarks. 
It strongly outperforms BA algorithm \cite{pham2011bees,el2012performance} on all benchmarks. 
PMSO outperforms ABC algorithm \cite{karaboga2005idea,akay2009parameter,el2012performance} on benchmarks F2, F3, F7, F8, and F25, and reaches its performance with slightly difference on benchmark F14. 
Moreover, it can be seen that PMSO algorithm outperforms $\text{HS}_\text{POP}$ algorithm \cite{mukhopadhyay2008population} on benchmarks F2, F7, F8, F11, and F21.
In comparison to $\text{ACO}_\text{R}$ algorithm \cite{socha2008ant,el2012performance}, PMSO algorithm outperforms it on benchmarks F3, F8, F11, F12, F14, F15, F16, F21, F24, and F25, and its performance is close to $\text{ACO}_\text{R}$ on benchmarks F17, F18, F19, F20, and F22.
In addition, it can be observed that PMSO method outperforms AAA algorithm \cite{uymaz2015artificial} on benchmarks F3 and F25, and performs closely to it on function F8. 
Finally, it is obvious that PMSO has better performance than DE algorithm \cite{storn1997differential,el2012performance} on benchmarks F2, F3, F8, and F25, and is close to it on benchmark function F14.

The results show the effectiveness and correctness of the proposed metaheuristic optimization algorithm, which can be used in different problems and various benchmarks. The performance of this method on different types of benchmarks, including uni-modal, multi-modal, and hybrid benchmarks, determines the fact that PMSO algorithm is applicable in optimization problems with different situations. The next section shows one of the applications of this algorithm which can be used.

\begin{table*}[!t]
\renewcommand{\arraystretch}{1.3}  
\caption{Mean, best and standard deviation values of solutions achieved for CEC05 multi-modal benchmarks taken over 30 runs at 10 dimensions.}
\label{table_multi_results_10D}
\centering
\scalebox{0.9}{    
\begin{tabular}{l | l | c : c : c : c : c : c : c}
\hline
\hline
& & \textbf{PMSO} & \textbf{BA} & \textbf{ABC} & $\textbf{HS}_\textbf{POP}$ & $\textbf{ACO}_\textbf{R}$ & \textbf{AAA} & \textbf{DE}\\
\hline
\multirow{3}{*}{F6}  
& Best & 8.488E$+$00 & 7.899E$+$02 & 5.465E$-$02 & 5.533E$-$01 & 3.661E$-$06 & 1.623E$-$06 & 3.020E$-$01\\
& Mean & 2.870E$+$03 & 2.936E$+$08 & 1.491E$+$00 & 2.709E$+$01 & 8.611E$+$01 & 1.219E$+$00 & 3.033E$+$00\\
& Std. & 3.270E$+$03 & 4.188E$+$08 & 2.487E$+$00 & 2.839E$+$01 & 4.412E$+$02 & 1.491E$+$00 & 2.213E$+$00\\
\hline
\multirow{3}{*}{F7}  
& Best & 1.336E$+$00 & 1.560E$+$02 & 1.267E$+$03 & 1.267E$+$03 & 3.161E$-$01 & 1.267E$+$03 & 2.363E$-$01\\
& Mean & 4.290E$+$01 & 1.645E$+$03 & 1.267E$+$03 & 1.267E$+$03 & 8.581E$-$01 & 1.267E$+$03 & 4.106E$-$01\\
& Std. & 5.009E$+$01 & 7.414E$+$02 & 2.313E$-$13 & 1.094E$-$02 & 2.913E$-$01 & 3.288E$-$02 & 9.131E$-$02\\
\hline
\multirow{3}{*}{F8}  
& Best & 2.006E$+$01 & 2.017E$+$01 & 2.019E$+$01 & 2.018E$+$01 & 2.014E$+$01 & 2.002E$+$01 & 2.021E$+$01\\
& Mean & 2.028E$+$01 & 2.034E$+$01 & 2.033E$+$01 & 2.033E$+$01 & 2.035E$+$01 & 2.016E$+$01 & 2.040E$+$01\\
& Std. & 9.470E$-$02 & 7.941E$-$02 & 7.863E$-$02 & 5.667E$-$02 & 8.296E$-$02 & 8.695E$-$02 & 6.267E$-$02\\
\hline
\multirow{3}{*}{F9}  
& Best & 1.492E$+$01 & 1.766E$+$01 & 0.000E$+$00 & 0.000E$+$00 & 2.985E$+$00 & 0.000E$+$00 & 0.000E$+$00\\
& Mean & 3.678E$+$01 & 5.135E$+$01 & 0.000E$+$00 & 2.801E$-$07 & 7.735E$+$00 & 0.000E$+$00 & 0.000E$+$00\\
& Std. & 1.232E$+$01 & 1.820E$+$01 & 0.000E$+$00 & 1.534E$-$06 & 3.603E$+$00 & 0.000E$+$00 & 0.000E$+$00\\
\hline
\multirow{3}{*}{F10}  
& Best & 2.290E$+$01 & 3.757E$+$01 & 1.008E$+$01 & 1.751E$+$01 & 7.091E$+$00 & 4.975E$+$00 & 1.144E$+$01\\
& Mean & 4.913E$+$01 & 7.113E$+$01 & 2.518E$+$01 & 2.193E$+$01 & 2.340E$+$01 & 1.501E$+$01 & 1.911E$+$01\\
& Std. & 1.518E$+$01 & 2.512E$+$01 & 7.635E$+$00 & 2.371E$+$00 & 7.573E$+$00 & 5.593E$+$00 & 3.599E$+$00\\
\hline
\multirow{3}{*}{F11}  
& Best & 3.974E$+$00 & 6.003E$+$00 & 4.175E$+$00 & 8.113E$+$00 & 4.981E$+$00 & 1.499E$+$00 & 4.875E$+$00\\
& Mean & 7.106E$+$00 & 9.360E$+$00 & 5.415E$+$00 & 9.221E$+$00 & 8.604E$+$00 & 3.667E$+$00 & 6.102E$+$00\\
& Std. & 1.496E$+$00 & 1.518E$+$00 & 7.297E$-$01 & 4.890E$-$01 & 9.727E$-$01 & 9.292E$-$01 & 6.214E$-$01\\
\hline
\multirow{3}{*}{F12}  
& Best & 4.444E$+$02 & 6.907E$+$03 & 8.504E$+$01 & 4.466E$+$01 & 1.349E$+$04 & 2.439E$+$00 & 1.715E$+$02\\
& Mean & 3.558E$+$03 & 3.014E$+$04 & 3.070E$+$02 & 3.168E$+$03 & 2.923E$+$04 & 5.435E$+$02 & 4.341E$+$02\\
& Std. & 4.139E$+$03 & 8.858E$+$03 & 1.634E$+$02 & 3.135E$+$03 & 6.722E$+$03 & 7.766E$+$02 & 1.850E$+$02\\
\hline
\multirow{3}{*}{F13}  
& Best & 8.125E$-$01 & 4.722E$+$00 & 3.125E$-$02 & 2.048E$-$01 & 8.360E$-$01 & 1.228E$-$01 & 7.506E$-$02\\
& Mean & 2.974E$+$00 & 9.636E$+$00 & 2.241E$-$01 & 8.897E$-$01 & 1.692E$+$00 & 4.231E$-$01 & 2.936E$-$01\\
& Std. & 1.717E$+$00 & 3.494E$+$00 & 8.985E$-$02 & 4.433E$-$01 & 5.347E$-$01 & 1.329E$-$01 & 1.176E$-$01\\
\hline
\multirow{3}{*}{F14}  
& Best & 2.867E$+$00 & 3.256E$+$00 & 2.992E$+$00 & 1.170E$+$00 & 3.219E$+$00 & 2.698E$+$00 & 3.174E$+$00\\
& Mean & 3.564E$+$00 & 3.940E$+$00 & 3.412E$+$00 & 2.485E$+$00 & 3.800E$+$00 & 3.296E$+$00 & 3.459E$+$00\\
& Std. & 2.668E$-$01 & 2.307E$-$01 & 1.441E$-$01 & 6.736E$-$01 & 2.861E$-$01 & 2.823E$-$01 & 1.299E$-$01\\
\hline
\hline
\end{tabular}
}
\end{table*}

\begin{table*}[!t]
\renewcommand{\arraystretch}{1.3}  
\caption{Mean, best and standard deviation values of solutions achieved for CEC05 hybrid benchmarks taken over 30 runs at 10 dimensions.}
\label{table_hybrid_results_10D}
\centering
\scalebox{0.9}{    
\begin{tabular}{l | l | c : c : c : c : c : c : c}
\hline
\hline
& & \textbf{PMSO} & \textbf{BA} & \textbf{ABC} & $\textbf{HS}_\textbf{POP}$ & $\textbf{ACO}_\textbf{R}$ & \textbf{AAA} & \textbf{DE}\\
\hline
\multirow{3}{*}{F15}  
& Best & 1.223E$+$02 & 4.100E$+$02 & 0.000E$+$00 & 0.000E$+$00 & 1.026E$+$02 & 0.000E$+$00 & 4.867E$-$01\\
& Mean & 4.108E$+$02 & 5.921E$+$02 & 7.329E$-$02 & 2.782E$+$02 & 4.355E$+$02 & 3.311E$+$01 & 1.653E$+$01\\
& Std. & 1.144E$+$02 & 9.820E$+$01 & 3.227E$-$01 & 1.786E$+$02 & 1.876E$+$02 & 3.778E$+$01 & 1.813E$+$01\\
\hline
\multirow{3}{*}{F16}  
& Best & 1.166E$+$02 & 1.385E$+$02 & 1.238E$+$02 & 1.159E$+$02 & 1.066E$+$02 & 9.222E$+$01 & 1.107E$+$02\\
& Mean & 1.987E$+$02 & 3.212E$+$02 & 1.476E$+$02 & 1.380E$+$02 & 2.055E$+$02 & 1.293E$+$02 & 1.443E$+$02\\
& Std. & 5.011E$+$01 & 7.830E$+$01 & 1.384E$+$01 & 1.030E$+$01 & 1.178E$+$02 & 1.596E$+$01 & 1.475E$+$01\\
\hline
\multirow{3}{*}{F17}  
& Best & 1.133E$+$02 & 1.659E$+$02 & 1.404E$+$02 & 1.285E$+$02 & 1.215E$+$02 & 9.984E$+$01 & 1.384E$+$02\\
& Mean & 2.065E$+$02 & 3.442E$+$02 & 1.694E$+$02 & 1.507E$+$02 & 1.890E$+$02 & 1.353E$+$02 & 1.730E$+$02\\
& Std. & 4.990E$+$01 & 8.931E$+$01 & 1.529E$+$01 & 1.146E$+$01 & 5.524E$+$01 & 1.825E$+$01 & 1.422E$+$01\\
\hline
\multirow{3}{*}{F18}  
& Best & 4.911E$+$02 & 9.685E$+$02 & 4.035E$+$02 & 6.269E$+$02 & 7.794E$+$02 & 3.000E$+$02 & 5.114E$+$02\\
& Mean & 9.453E$+$02 & 1.104E$+$03 & 5.086E$+$02 & 8.541E$+$02 & 9.317E$+$02 & 4.864E$+$02 & 7.459E$+$02\\
& Std. & 1.360E$+$02 & 5.856E$+$01 & 7.031E$+$01 & 1.013E$+$02 & 9.863E$+$01 & 2.242E$+$02 & 1.012E$+$02\\
\hline
\multirow{3}{*}{F19}  
& Best & 6.150E$+$02 & 9.810E$+$02 & 4.349E$+$02 & 3.002E$+$02 & 8.001E$+$02 & 3.000E$+$02 & 3.740E$+$02\\
& Mean & 9.844E$+$02 & 1.111E$+$03 & 5.213E$+$02 & 8.265E$+$02 & 9.552E$+$02 & 4.529E$+$02 & 6.980E$+$02\\
& Std. & 1.046E$+$02 & 6.753E$+$01 & 7.592E$+$01 & 1.545E$+$02 & 6.679E$+$01 & 1.993E$+$02 & 1.363E$+$02\\
\hline
\multirow{3}{*}{F20}  
& Best & 6.066E$+$02 & 8.019E$+$02 & 5.000E$+$02 & 5.375E$+$02 & 7.244E$+$02 & 3.000E$+$02 & 5.007E$+$02\\
& Mean & 9.610E$+$02 & 1.097E$+$03 & 5.430E$+$02 & 8.674E$+$02 & 9.396E$+$02 & 4.842E$+$02 & 7.536E$+$02\\
& Std. & 1.303E$+$02 & 8.884E$+$01 & 1.075E$+$02 & 1.019E$+$02 & 9.376E$+$01 & 2.135E$+$02 & 1.110E$+$02\\
\hline
\multirow{3}{*}{F21}  
& Best & 2.000E$+$02 & 5.037E$+$02 & 2.035E$+$02 & 5.000E$+$02 & 5.000E$+$02 & 3.000E$+$02 & 2.270E$+$02\\
& Mean & 9.687E$+$02 & 1.266E$+$03 & 3.459E$+$02 & 1.038E$+$03 & 1.088E$+$03 & 5.233E$+$02 & 4.642E$+$02\\
& Std. & 3.105E$+$02 & 1.574E$+$02 & 9.939E$+$01 & 1.990E$+$02 & 2.130E$+$02 & 1.006E$+$02 & 9.040E$+$01\\
\hline
\multirow{3}{*}{F22}  
& Best & 4.573E$+$02 & 9.150E$+$02 & 2.008E$+$02 & 7.574E$+$02 & 5.318E$+$02 & 3.000E$+$02 & 7.760E$+$02\\
& Mean & 8.337E$+$02 & 1.025E$+$03 & 7.084E$+$02 & 7.873E$+$02 & 8.315E$+$02 & 7.347E$+$02 & 7.941E$+$02\\
& Std. & 9.306E$+$01 & 6.621E$+$01 & 1.905E$+$02 & 2.429E$+$01 & 1.010E$+$02 & 1.196E$+$02 & 7.699E$+$00\\
\hline
\multirow{3}{*}{F24}  
& Best & 2.000E$+$02 & 1.110E$+$03 & 2.000E$+$02 & 2.000E$+$02 & 3.744E$+$02 & 2.000E$+$02 & 2.000E$+$02\\
& Mean & 3.386E$+$02 & 1.264E$+$03 & 2.000E$+$02 & 2.400E$+$02 & 6.038E$+$02 & 2.000E$+$02 & 2.000E$+$02\\
& Std. & 2.200E$+$02 & 5.919E$+$01 & 0.000E$+$00 & 1.038E$+$02 & 2.904E$+$02 & 0.000E$+$00 & 1.201E$-$02\\
\hline
\multirow{3}{*}{F25}  
& Best & 2.000E$+$02 & 1.303E$+$03 & 6.176E$+$02 & 2.000E$+$02 & 3.727E$+$02 & 8.120E$+$02 & 8.183E$+$02\\
& Mean & 5.651E$+$02 & 1.390E$+$03 & 7.689E$+$02 & 3.196E$+$02 & 5.708E$+$02 & 8.170E$+$02 & 8.272E$+$02\\
& Std. & 2.963E$+$02 & 4.070E$+$01 & 9.427E$+$01 & 1.165E$+$02 & 2.967E$+$02 & 2.464E$+$00 & 3.619E$+$00\\
\hline
\hline
\end{tabular}
}
\end{table*}

\begin{figure}[!t]
\centering
\includegraphics[width=2.3in]{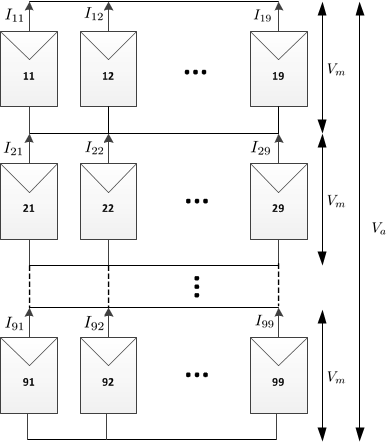}
\caption{A $9 \times 9$ solar PV array in TCT configuration.}
\label{fig_solar_pv_array}
\end{figure}

\section{Using PMSO in partially shaded solar PV array}\label{Section_Solar}

\subsection{Different configurations of solar PV array}
The proposed optimization algorithm can be applied to various types of real-world problems. In this section, as an example, it is used to solve the problem of configuration of partially shaded solar PV array for maximum possible power. Two well-known configurations of PV array are TCT and Su Do Ku configurations. In all configurations of solar PV arrays, each cell can be moved merely on its own column.

In TCT configuration, the solar cells are connected to each other in Series-Parallel (SP) scheme. For instance, figure \ref{fig_solar_pv_array} illustrates a $9 \times 9$ solar PV array in TCT configuration. As can be seen in this figure, the cells in a row are parallel to each other and the cells in a column are connected serially. In this figure, $I_{ij}$ denotes the current of cell in the $i^{th}$ row and the $j^{th}$ column. $V_m$ is also the voltage of every row and $V_a$ is the total obtained voltage which is $9 \times V_m$ here.

However, if shadow falls on the solar array partially, TCT configuration does not disperse the shadow uniformly on the array \cite{deshkar2015solar} and therefore the output power is not the optimum power. Therefore, Su Do Ku arrangement was proposed \cite{woyte2003partial} which disperses the shadow by replacing cells using Su Do Ku puzzle configuration. Yet, the Su Do Ku configuration does not yield the maximum possible power and also has several drawbacks \cite{deshkar2015solar}. 

Hence, recent researches have been done on heuristic optimization for the problem of partial shading in solar arrays. As an example, Genetic algorithm is used for optimization in \cite{deshkar2015solar}. In this paper, the cells are replaced in every column. Every chromosome is $9 \times 9$ two-dimensional and each cell plays the role of a gene. The mutation and cross-over is performed so that every cell can be moved (replaced) in its own column.

\subsection{Calculations of solar PV array}\label{section_Calculations_solar}
Suppose that $I_m$ is the current generated by a cell which receives the standard irradiance $G_0$. $G_0$ is taken to be $G_0 = \text{1000 } W/m^2$ \cite{deshkar2015solar}. 
If the irradiance on the cell is denoted as $G$ ($G \leq G_0$), the irradiance factor is obtained as $k = \frac{G}{G_0}$.

According to Kirchhoff's current law, the total current of each row is obtained as,

\begin{equation}
\label{equation_Kirchhoff_current}
I_{i} = \sum_{j=1}^C k_{ij} I_{ij} \quad \quad ,\forall i \in \{1,2,\dots,R\}
\end{equation}
where $R$ and $C$ are, respectively, the number of rows and columns of array which are both nine in the mentioned example. Also, $k_{ij}$ is the irradiance factor of the cell in the $i^{th}$ row and the $j^{th}$ column. Here, $I_{ij}$ is the current obtained from the corresponding cell at the standard irradiation.
As the cells are supposed to be identical, it can be assumed that all $I_{ij}$'s are the same and are equal to $I_m$. Therefore, equation (\ref{equation_Kirchhoff_current}) can be rewritten as,

\begin{equation}
I_{i} = I_m \sum_{j=1}^C k_{ij} \quad \quad ,\forall i \in \{1,2,\dots,R\}.
\end{equation}

In all the configurations, the row currents are sorted from the smallest to the largest value. Thereafter, the rows are bypassed one-by-one from the first sorted row to the last sorted one, resulting in less amount of output voltage $V_a$. In one of these bypasses, the output power $(P_a = V_a \times I_i)$ is the maximum which is desired. Hence, that bypass is performed and the maximum power of that configuration is obtained. 
In the following sections, results of bypassing rows using different methods, including PMSO, are reported. 

\subsection{PMSO algorithm for solar PV array}
The proposed PMSO algorithm can be utilized for any arbitrary optimization problem. Here, we use this algorithm for partially shaded solar PV array as an example. Because of technical calculations and configurations of sloar PV array, some slight changes are required to be applied on this algorithm which are detailed in the following.

The pseudo-code of proposed PMSO algorithm for this problem is shown in Algorithm \ref{the_algorithm2}. As can be seen, the algorithm is similar to the original proposed algorithm with some differences. In this algorithm, every Gammarus individual is $R \times C$ two-dimensional. $R$ is the number of rows and $C$ is the number of columns of array. Here, in the mentioned example, both $R$ and $C$ are nine. For every particle, the distance from the found global best GB and the location of particle is calculated (line \ref{algorithm2_distance} of Algorithm \ref{the_algorithm2}). In this calculation, $f(\text{GB}, \text{G}_i)$ is the second norm of vector $v$ whose $m^{th}$ component is found as,

\begin{equation}
\label{equation9}
v(m) = 
  \left\{
      \begin{array}{l}
        0 \quad \quad \text{if} \quad \text{GB}(m) = \text{G}_i(m),\\
        1 \quad \quad \text{otherwise}.\\
      \end{array}
    \right.
\end{equation}
Also, $f_\text{max}$ is the second norm of a vector with the size of $v$ and all elements of one. Hence, the calculated distance is normalized and is always less than or equal to one.

\begin{algorithm}[!t]
\caption{PMSO for Partially Shaded Solar PV Array}\label{the_algorithm2}
\begin{algorithmic}[1]
\State \textbf{Initialize} NG, IL, IG, and initial $\text{N}_i$
\For{$i$ = 1 to NG}
	\While{collision occurred}
		\State Randomly \textbf{Swap} cells in their columns
	\EndWhile
\EndFor
\While{stop criterion is not reached} 
	\For{$i$ = 1 to NG}
		\If{it is not first iteration}
			\For{$c$ = 1 to C} \label{algorithm2_iteration_columns}
				\State $\text{Distance} \gets \frac{f(\text{GB}, \text{G}_i)}{f_{\text{max}}}$ \label{algorithm2_distance}
				\For{$r$ = 1 to R}  \label{algorithm2_iteration_rows}
					\If{$U(0,1) < \text{Distance}$}  \label{algorithm2_probability_distance}
						\State $|\text{W}_i| \gets \text{round}(R \times \text{Distance})$ \label{algorithm2_wave_magnitude}
						\State $\text{W}_i \gets \text{round}(U(-|\text{W}_i|,|\text{W}_i|))$ \label{algorithm2_wave_angle}
						\If{$r + \text{W}_i < 1$} \label{algorithm2_start_ring}
            				\State \textbf{Swap} cells $(r,c)$ \textbf{and} $((r + \text{W}_i + R),c)$
			        	\ElsIf{$r + \text{W}_i > R$}
    				        \State \textbf{Swap} cells $(r,c)$ \textbf{and} $((r + \text{W}_i - R),c)$
    			    	\Else
	        		    	\State \textbf{Swap} cells $(r,c)$ \textbf{and} $((r + \text{W}_i),c)$
	        			\EndIf  \label{algorithm2_end_ring}
		        	\EndIf
    		    \EndFor
        	\EndFor  
		\EndIf
		\For{$j$ = 1 to IL} 
			\State \textbf{do} Local Search   
		\EndFor
		\If{$\text{LB}_i$ is better than GB} 
			\State GB $\gets \text{LB}_i$
		\EndIf  
	\EndFor
	\State $\text{IL} \gets$ max($\text{IL} - \text{S}_\text{IL}, \text{L}_\text{IL}$) 
\EndWhile
\end{algorithmic}
\end{algorithm}

Afterwards, by iterations on all cells of a Gammarus particle (lines \ref{algorithm2_iteration_columns} and \ref{algorithm2_iteration_rows} of Algorithm \ref{the_algorithm2}), every cell of the Gammarus is moved in its own column with probability of Distance (line \ref{algorithm2_probability_distance} of Algorithm \ref{the_algorithm2}). If it is moved, the amount of moving, which is the wave size $|\text{W}_i|$, is calculated by rounding the product of $R$ and Distance (line \ref{algorithm2_wave_magnitude} of Algorithm \ref{the_algorithm2}); therefore, the wave size is at most the number of rows because Distance is normalized. Then, the wave is a random number in the range $[-|\text{W}_i|,|\text{W}_i|]$ (line \ref{algorithm2_wave_angle} of Algorithm \ref{the_algorithm2}). This models the angle of the wave in the original algorithm. Thereafter, the cell is moved in its own column with this assumption that the first and last cell of every column are connected to each other as a ring (lines \ref{algorithm2_start_ring}-\ref{algorithm2_end_ring} of Algorithm \ref{the_algorithm2}). The rest of the algorithm is the same as the original one, except that initial and adaptive neighborhoods are not used in this algorithm because Gammarus particles are 2D and as was previously explained, in low-dimensional problems, these steps can be omitted for the sake of simplicity.

At every step of algorithm, each Gammarus particle presents a possible configuration of the solar array. Here, the fitness of every Gammarus particle is the maximum output power of the configuration the Gammarus presents. The maximum output power is obtained as was explained in Section \ref{section_Calculations_solar}.

\begin{table}[!t]
\renewcommand{\arraystretch}{1.3}  
\caption{Calculations of solar PV array in the configurations of TCT and Su Do Ku}
\label{table_solar_calculations_1}
\centering
\scalebox{0.8}{    
\begin{tabular}{l | c | c | c}
\hline
\hline
& Sorted cells to be bypassed & $V_a$ & $P_a$\\
\hline
\multirow{9}{*}{ \textbf{TCT configuration}} 
& $I_9 = 3.6 I_m$ & $9 V_m$ & $32.4 V_m I_m$\\
& $I_8 = 3.6 I_m$ & $8 V_m$ & $28.8 V_m I_m$\\
& $I_7 = 3.6 I_m$ & $7 V_m$ & $25.2 V_m I_m$\\
& $I_6 = 6.6 I_m$ & $6 V_m$ & $39.6 V_m I_m$\\
& $I_5 = 8.1 I_m$ & $5 V_m$ & $40.5 V_m I_m$\\
& $I_4 = 8.1 I_m$ & $4 V_m$ & $32.4 V_m I_m$\\
& $I_3 = 8.1 I_m$ & $3 V_m$ & $24.3 V_m I_m$\\
& $I_2 = 8.1 I_m$ & $2 V_m$ & $16.2 V_m I_m$\\
& $I_1 = 8.1 I_m$ & $V_m$ & $8.1 V_m I_m$\\
\hline
\multirow{9}{*}{ \textbf{Su Do Ku configuration}} 
& $I_6 = 6.3 I_m$ & $9 V_m$ & $56.7 V_m I_m$\\
& $I_7 = 6.3 I_m$ & $8 V_m$ & $50.4 V_m I_m$\\
& $I_8 = 6.3 I_m$ & $7 V_m$ & $44.1 V_m I_m$\\
& $I_1 = 6.3 I_m$ & $6 V_m$ & $37.8 V_m I_m$\\
& $I_2 = 6.3 I_m$ & $5 V_m$ & $31.5 V_m I_m$\\
& $I_4 = 6.6 I_m$ & $4 V_m$ & $26.4 V_m I_m$\\
& $I_3 = 6.6 I_m$ & $3 V_m$ & $19.8 V_m I_m$\\
& $I_5 = 6.6 I_m$ & $2 V_m$ & $13.2 V_m I_m$\\
& $I_9 = 6.6 I_m$ & $V_m$ & $6.6 V_m I_m$\\
\hline
\hline
\end{tabular}%
}
\end{table}

\begin{table}[!t]
\renewcommand{\arraystretch}{1.3}  
\caption{Calculations of solar PV array in the configurations obtained by GA and PMSO algorithms}
\label{table_solar_calculations_2}
\centering
\scalebox{0.8}{    
\begin{tabular}{l | c | c | c}
\hline
\hline
& Sorted cells to be bypassed & $V_a$ & $P_a$\\
\hline
\multirow{9}{*}{ \textbf{GA algorithm}} 
& $I_2 = 6.3 I_m$ & $9 V_m$ & $56.7 V_m I_m$\\
& $I_5 = 6.3 I_m$ & $8 V_m$ & $50.4 V_m I_m$\\
& $I_6 = 6.3 I_m$ & $7 V_m$ & $44.1 V_m I_m$\\
& $I_8 = 6.3 I_m$ & $6 V_m$ & $37.8 V_m I_m$\\
& $I_3 = 6.4 I_m$ & $5 V_m$ & $32 V_m I_m$\\
& $I_7 = 6.5 I_m$ & $4 V_m$ & $26 V_m I_m$\\
& $I_1 = 6.6 I_m$ & $3 V_m$ & $19.8 V_m I_m$\\
& $I_4 = 6.6 I_m$ & $2 V_m$ & $13.2 V_m I_m$\\
& $I_9 = 6.6 I_m$ & $V_m$ & $6.6 V_m I_m$\\
\hline
\multirow{9}{*}{ \textbf{PMSO algorithm}} 
& $I_1 = 6.2 I_m$ & $9 V_m$ & $56.7 V_m I_m$\\
& $I_5 = 6.2 I_m$ & $8 V_m$ & $50.4 V_m I_m$\\
& $I_2 = 6.3 I_m$ & $7 V_m$ & $44.1 V_m I_m$\\
& $I_8 = 6.3 I_m$ & $6 V_m$ & $37.8 V_m I_m$\\
& $I_3 = 6.4 I_m$ & $5 V_m$ & $32 V_m I_m$\\
& $I_6 = 6.4 I_m$ & $4 V_m$ & $26 V_m I_m$\\
& $I_9 = 6.4 I_m$ & $3 V_m$ & $19.8 V_m I_m$\\
& $I_7 = 6.6 I_m$ & $2 V_m$ & $13.2 V_m I_m$\\
& $I_4 = 7.1 I_m$ & $V_m$ & $6.6 V_m I_m$\\
\hline
\hline
\end{tabular}%
}
\end{table}

\begin{figure*}[!t]
\centering
\begin{subfigure}[b]{0.49\textwidth}
\centering
\includegraphics[width=2.5in]{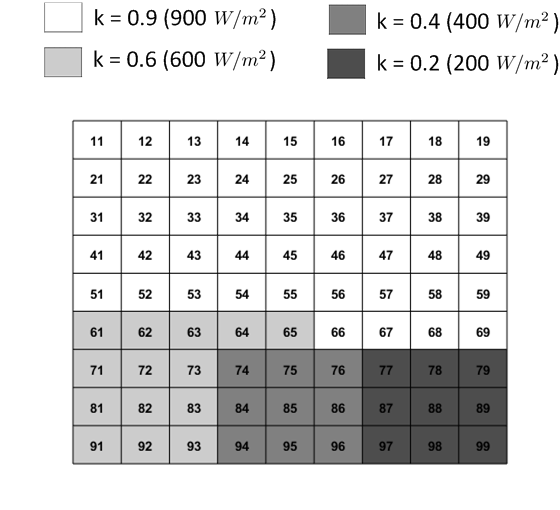} 
\caption{TCT configuration}
\label{fig_TCT_array}
\end{subfigure}
\begin{subfigure}[b]{0.49\textwidth}
\centering
\includegraphics[width=2.5in]{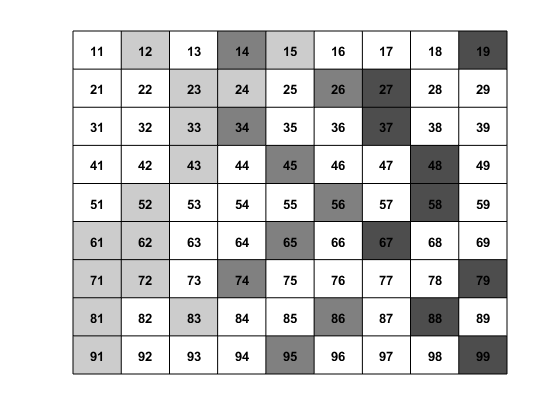}
\caption{Su Do Ku configuration \cite{deshkar2015solar}}
\label{fig_SuDoKu_array}
\end{subfigure}
\begin{subfigure}[b]{0.49\textwidth}
\centering
\includegraphics[width=2.5in]{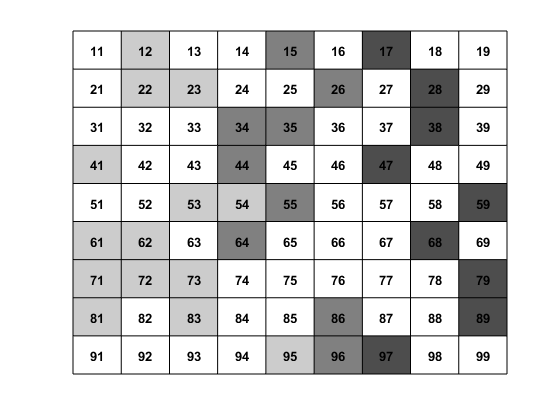}
\caption{GA algorithm \cite{deshkar2015solar}}
\label{fig_GA_array}
\end{subfigure}
\begin{subfigure}[b]{0.49\textwidth}
\centering
\includegraphics[width=2.5in]{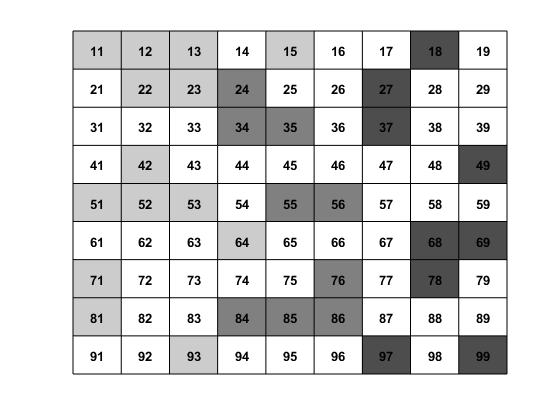}
\caption{PMSO algorithm}
\label{fig_PMSO_array}
\end{subfigure}
\caption{The different configurations of solar PV array.}
\label{fig_solar1}
\end{figure*}

\begin{figure}[!t]
\centering
\includegraphics[width=3in]{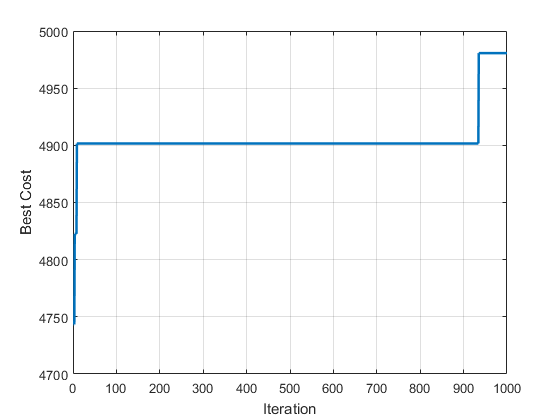}
\caption{Best solutions found by PMSO in different iterations.}
\label{fig_solar2}
\end{figure}

\begin{figure}[!t]
\centering
\includegraphics[width=3in]{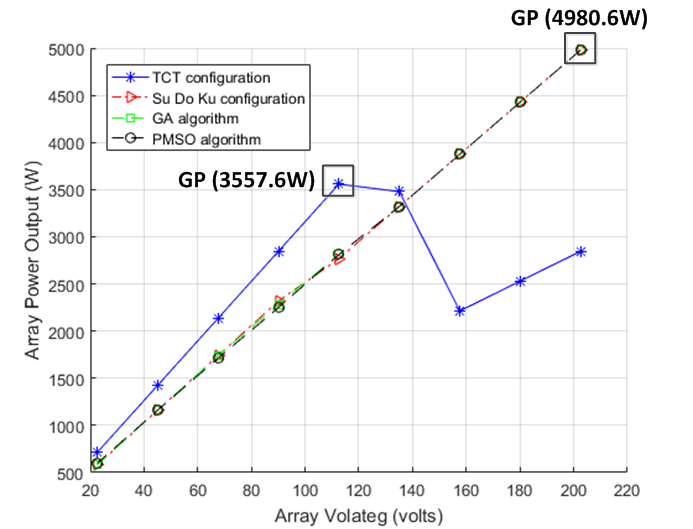}
\caption{Array power output against array voltage in different configurations.}
\label{fig_solar3}
\end{figure} 

\subsection{Experiments on the Solar PV Arrays}
Three different shadings on the solar PV array are experimented in this section, as it is performed in \cite{deshkar2015solar}. We compare our algorithm with TCT, Su Do Ku, and \cite{deshkar2015solar} which uses GA. According to tables and reported numbers of \cite{deshkar2015solar}, we concluded that they have chosen $V_m$ and $I_m$ to be 22.512 and 3.902, respectively. They are set similarly in this work.

The partial shading case which is tested here is the short and wide shadow. 
The different configurations of the solar PV array, in this case, are depicted in figure \ref{fig_solar1}. 
The shadow is shading on the array as shown in figure \ref{fig_TCT_array} which shows TCT configuration. 
The configurations of figures \ref{fig_SuDoKu_array} and \ref{fig_GA_array}, which are obtained respectively by Su Do Ku \cite{woyte2003partial} and GA \cite{deshkar2015solar} algorithms, are reported from \cite{deshkar2015solar}. The configuration of figure \ref{fig_PMSO_array} is obtained by the proposed PMSO algorithm. The curve of best solutions found by PMSO algorithm in the different iterations is depicted in figure \ref{fig_solar2}. The output power against the array voltage is also shown in figure \ref{fig_solar3}. As can be seen, the maximum power of both PMSO and GA algorithms are 4.9 KW while the maximum power of TCT configuration is 3.5 KW. This shows that the PMSO algorithm can perform well for this problem and finds the optimum solution which yields to the maximum possible output power. The calculations of solar array in different configurations are calculated according to the previous section and are listed in tables \ref{table_solar_calculations_1} and \ref{table_solar_calculations_2}.

\section{Conclusion}\label{Section_Conclusion}
In this paper, PMSO algorithm was introduced as a new metaheuristic optimization algorithm. One of the advantages of this algorithm in comparison to others is that it can escape from local bests better, in some cases. In this algorithm, exploration is satisfied by the sea wave whose strength is computed according to distance of particle from global best which is modeled as sea edge (coast). On the other hand, exploitation is performed by local search of each Gammarus. Details of PMSO algorithm are inspired by the maritime nature and behavior of Gammarus.
This algorithm, although is applicable on different types of optimization problems, was used in problem of partially shaded solar PV array and the experiments showed that this algorithm finds the configuration having the maximum output power.


%

\appendices




\ifCLASSOPTIONcaptionsoff
  \newpage
\fi



%

\bibliographystyle{IEEEtran}
\bibliography{References}

%








\end{document}